\documentclass[conference]{IEEEtran}
\IEEEoverridecommandlockouts
\usepackage{cite}
\usepackage{amsmath,amssymb,amsfonts}
\usepackage{amsthm}
\usepackage{algorithm2e}
\usepackage{graphicx}
\usepackage{textcomp}
\usepackage{xcolor}
\usepackage{float}
\usepackage[inkscapelatex=false]{svg}
\usepackage{stmaryrd}
\usepackage{hyperref}
\usepackage{multicol}
\usepackage{tabularx}
\usepackage{multirow}
\usepackage{soul}
\usepackage{xcolor}
\usepackage[normalem]{ulem}
\useunder{\uline}{\ul}{}
\usepackage{dblfloatfix} 
\sethlcolor{yellow!30}

\definecolor{replyblue}{RGB}{0,70,170}

\newcommand{\argmin}{\mathop{\mathrm{arg\,min\,}}}

\newtheorem{theorem}{\textbf{Theorem}}
\newtheorem{definition}{\textbf{Definition}}
\newtheorem{lemma}{\textbf{Lemma}}

\def\BibTeX{{\rm B\kern-.05em{\sc i\kern-.025em b}\kern-.08em
    T\kern-.1667em\lower.7ex\hbox{E}\kern-.125emX}}
    
\begin{document}

\title{A convergent Plug-and-Play Majorization-Minimization algorithm for linear Poisson inverse problems  \\
\thanks{This work was partially funded by the PRIMES LABEX (ANR-11-LABX-0063) at the University of Lyon, within the ”Investissements d’Avenir” program (ANR-11-IDEX-0007).}
}

\author{
\IEEEauthorblockN{Thibaut Modrzyk$^1$, Ane Etxebeste$^1$, Elie Bretin$^2$, Voichita Maxim$^1$}
\IEEEauthorblockA{$^1$ INSA‐Lyon, CREATIS UMR 5220 U1294, Villeurbanne, F-69621, France}
\IEEEauthorblockA{$^2$ INSA-Lyon, CNRS UMR 5208, Institut Camille
Jordan, Villeurbanne, F-69621, France.}
}

\maketitle

\begin{abstract}

In this paper, we present a novel variational plug-and-play algorithm for Poisson inverse problems. Our approach minimizes an explicit functional which is the sum of a Kullback-Leibler data fidelity term and a regularization term based on a pre-trained neural network. By combining classical likelihood maximization methods with recent advances in gradient-based denoisers, we allow the use of pre-trained Gaussian denoisers without sacrificing convergence guarantees. The algorithm is formulated in the majorization-minimization framework, which guarantees convergence to a stationary point. Numerical experiments confirm state-of-the-art performance in deconvolution and tomography under moderate noise, and demonstrate clear superiority in high-noise conditions, making this method particularly valuable for nuclear medicine applications.

\end{abstract}

\begin{IEEEkeywords}
Inverse Problems, Poisson noise, Image Deblurring
\end{IEEEkeywords}

\section{Introduction}
\label{sec:intro}

Linear Poisson inverse problems have been studied extensively over the past 50 years, due to their relevance to a wide range of applications in imaging \cite{berteroInverseImagingPoisson2018}. In this setting, the objective is to recover true data $x \in \mathbb{R}^{n}_+$ from its degraded version $y \in \mathbb{R}^{m}_+$, with a degradation modeled as:
\begin{equation}
    y \sim \frac{1}{\zeta}\mathcal{P}(\zeta Ax) ,
    \label{eq:inverse-pb}
\end{equation}
where $A$ denotes both a linear operator and its matrix with positive coefficients ($A_{ij} \geq 0$), which could represent a blur, a down-scaling, or a Radon projector, $\mathcal{P}$ represents a noise corruption, for instance following a Poisson distribution, and $\zeta$ is a gain parameter controlling the strength of the noise.

A standard approach to solve such an inverse problem is to formulate it as an optimization task, aiming at minimizing the negative log-likelihood (NLL) of the reconstruction with respect to the degraded data. However, due to the ill-posedness of the forward operator $A$, a regularization term is often added to constrain the space of plausible reconstructions.
This then amounts to solving the following optimization problem:
\begin{equation}
    x^* \in \argmin_x \left\{ f(x) + \lambda g(x) \right\}
    \label{eq:map}
\end{equation}
where $f$ is the NLL, and $g$ is the regularization imposed on the reconstruction $x$. 
For Poisson-distributed measurements, the data-fidelity term associated with the likelihood can be written, up to additive constants independent of $x$, as a generalized Kullback--Leibler (KL) divergence between the observations $y$ and the predicted measurements $Ax$.
Discarding these constants, which do not affect the optimization, yields the following objective:
\begin{equation}
    f(x) = \sum_{i=1}^m \left[ (Ax)_i - y_i \log (Ax)_i \right] ,
    \label{eq:poisson-likelihood}
\end{equation}
where $i$ indexes the measurement bins. By convention $\log(0) = - \infty$.
Poisson noise arises naturally in a variety of imaging applications, including astronomy \cite{berteroInverseImagingPoisson2018}, medical imaging modalities such as computed tomography (CT), positron emission tomography (PET), and single photon emission computed tomography (SPECT) \cite{sheppMaximumLikelihoodReconstruction1982a, langeEMReconstructionAlgorithms1984a}, as well as confocal microscopy \cite{deyRichardsonLucyAlgorithmTotal2006}.

In scenarios with high signal-to-noise ratio (SNR), the Poisson noise model can be effectively approximated by a Gaussian distribution using the Anscombe transform \cite{anscombeTransformationPoissonBinomial1948}. However, this approximation fails in low-SNR settings, which are characteristic of most of the aforementioned imaging applications. In this setting, the Poisson data-fidelity term (\ref{eq:poisson-likelihood}), while convex, exhibits a non-Lipschitz gradient near zero, and its proximal operator does not admit a closed-form expression when $A \neq \text{Id}$. This means that a straightforward application of most well-known optimization schemes is impossible. This difficulty has motivated the development of specialized optimization schemes tailored to Poisson data-fidelity terms, including Expectation Maximization (EM) algorithms \cite{sheppMaximumLikelihoodReconstruction1982a}, scaled gradient projections \cite{bonettiniNewConvergenceResults2015}, augmented Lagrangian methods \cite{figueiredoRestorationPoissonianImages2010}, and Bregman proximal gradient algorithms based on Burg’s entropy \cite{bauschkeDescentLemmaLipschitz2017}.

An additional challenge is the selection of an appropriate regularization $g$ tailored to the specific application. Among the most widely used priors is Total Variation (TV) regularization \cite{rudinNonlinearTotalVariation1992a}, which promotes piecewise-constant solutions, facilitating the recovery of sharp edges in reconstructions. In the case of hand-crafted priors, several algorithms \cite{figueiredoRestorationPoissonianImages2010, chambolleFirstOrderPrimalDualAlgorithm2011a, maximTomographicReconstructionPoisson2023} provide convergence guarantees in the case of a convex regularization.

The progress of deep learning across various domains of image processing has shifted the focus from traditional hand-crafted priors to learned regularizations, typically implemented with neural networks. Among the methods that integrate deep learning, one can mention the Plug-and-Play (PnP) \cite{venkatakrishnanPlugPlayPriorsModel2013} and the Regularization by Denoising (RED) \cite{TheLittle} frameworks.
However, their convergence is often constrained by strong and sometimes unrealistic assumptions on the learned model, such as requiring the neural network to be convex or non-expansive, even in the relatively simpler case of Gaussian noise \cite{ryuPlugandPlayMethodsProvably2019a, chanPlugandPlayADMMImage2017}. These assumptions are typically enforced through specific architectural choices for the network \cite{terrisBuildingFirmlyNonexpansive2020}, which can limit the performance compared to state-of-the-art denoising architectures, such as DRUNet \cite{zhangPlugPlayImageRestoration2022} and DnCNN \cite{zhangGaussianDenoiserResidual2017a}.

Expressing the learned priors as explicit gradient-step or proximal operators enable their use without constraining the expressivity of the neural network \cite{huraultGradientStepDenoiser2021, cohenItHasPotential2021, huraultProximalDenoiserConvergent2022a}. These \textit{Gradient-Step} (GS) denoisers can then be seamlessly integrated into various iterative splitting schemes, such as Forward-Backward splitting, the Alternating Direction Method of Multipliers (ADMM), and Douglas-Rachford splitting, while retaining provable convergence guarantees.

While originally intended for Gaussian noise, these GS-PnP algorithms can also handle data-fidelity terms with non-Lipschitz gradient such as the Poisson NLL. However, such algorithms rely on the computation of the proximal operator of $f$. When this proximal operator does not admit a closed-form solution, we have to resort to iterative approximations. This typically introduces an inner optimization loop, increasing the overall computational cost, and, unless the approximation errors are carefully controlled (e.g., by running the inner solver to sufficient accuracy), such inexact proximal steps may compromise the theoretical convergence guarantees of the algorithm \cite{NIPS2011_8f7d807e}.

We present a novel convergent PnP algorithm adapted to Poisson noise where we incorporate ideas from Majorization-Minimization (MM) algorithms. Building on the classical approaches to Poisson inverse problems, we arrive at an algorithm closely related to well-known multiplicative update schemes derived from the EM algorithm \cite{dempsterMaximumLikelihoodIncomplete1977}. Our method splits the optimization of the data-fidelity term and the regularization while maintaining closed-form updates at each iteration. It leverages advancements from both the deep learning literature, by incorporating regularization through a neural network, and the classical optimization and image processing domains, through the selection of an appropriate domain-specific surrogate function. 
Notably, our algorithm retains theoretical convergence guarantees on Poisson inverse problems while still relying on a widely adopted and easy to train Gaussian denoiser. The final algorithm also preserves the non-negativity constraint at each iteration.

The main contributions of this paper are the following:
\begin{itemize}
    \item We propose a new PnP algorithm for solving Poisson inverse problems, using an appropriate surrogate function of the KL divergence.
    \item We show that each iteration decreases an explicit objective function, ensuring convergence to a stationary point of this objective.
    \item The Gaussian GS denoiser satisfies the conditions required for theoretical convergence when combined with the Poisson negative log-likelihood, while also enabling strong empirical performance.
    \item We extensively evaluate the method on several linear inverse problems and datasets, including PET reconstruction and CT imaging corrupted by Poisson and Poisson–Gaussian noise.
    \item Finally, to facilitate future comparisons, we provide an open-source PyTorch implementation of our method, relying on the DeepInverse python library \cite{Tachella_DeepInverse_A_deep_2023}. This implementation will be made available at \href{https://github.com/Tmodrzyk/PnP-MM}{https://github.com/Tmodrzyk/PnP-MM} upon acceptance of this manuscript. 
\end{itemize}

 In section \ref{sec:methods} we introduce the methods and notations relative to Poisson inverse problems, along with PnP methods. In section \ref{sec:pnp-mm} we present in detail our method along with an analysis of its convergence properties. Finally, in section \ref{sec:exp} we present numerical results on several Poisson inverse problems: deconvolution, and tomographic reconstruction in various settings. 
    
\section{Related works}
\label{sec:methods}

 In this section, we provide a comprehensive overview of the concepts our method rely on, highlighting their strengths, limitations, and similarities. We also introduce the PnP framework, which forms the foundation of our work and is extended here to the Poisson data distribution.

\subsection{Majorization-Minimization algorithms}
\label{subsec:methods:mm}

One of the earliest and most extensively studied Poisson inverse problems is deconvolution, where the linear operator $A$ represents a convolution by a known Point-Spread-Function kernel. The Richardson-Lucy algorithm \cite{richardsonBayesianBasedIterativeMethod1972a ,lucyIterativeTechniqueRectification1974a} employs a Bayesian perspective to formulate an iterative scheme that minimizes the Poisson NLL cost function described in (\ref{eq:poisson-likelihood}). 

Later interpreted in the Expectation Maximization (EM) framework \cite{dempsterMaximumLikelihoodIncomplete1977}, it found widespread application in the field of PET through the work of Shepp and Vardi \cite{sheppMaximumLikelihoodReconstruction1982a}. The introduction of MLEM inspired the development of numerous other EM-based algorithms for emission tomography \cite{langeEMReconstructionAlgorithms1984a, hudsonAcceleratedImageReconstruction1994, pierroModifiedExpectationMaximization1995, depierroFastEMlikeMethods2001}, several of which are implemented in PET scanners nowadays.

In the early 2000s, the algorithm emerged again under a different name in the Non-Negative Matrix Factorization (NMF) community \cite{leeAlgorithmsNonnegativeMatrixa}. A key difference with previous studies is that the NMF setting would be equivalent to the blind inverse problem, where the operator $A$ is estimated during the reconstruction. 
For a comprehensive overview of these algorithms in the context of NMF, interested readers can refer to \cite{fevotteAlgorithmsNonnegativeMatrix2011}.

A notable progress made by the tomography and NMF communities is the interpretation of these iterations as a MM algorithm. The general MM framework \cite{langeMMOptimizationAlgorithms2016}, including for instance EM methods as well as some proximal splitting algorithms, relies heavily on finding for $f$ an appropriate surrogate function $F$ called a \textit{tangent majorant} \cite{langeMMOptimizationAlgorithms2016}, which serves as an upper bound to the original objective function within the MM framework.
\begin{definition}
    \label{def:surrogate}
    Let $f : \mathbb{R}^d \rightarrow \mathbb{R}$ be a differentiable function. A function $F(x, \tilde{x})$ is called a tangent majorant of $f$ if it satisfies the following two conditions:
    \begin{align*}
        \forall (x, \tilde{x}) \in (\mathbb{R}^d)^2, \quad f(x) &\leq F(x, \tilde{x})
        \\ \forall x \in \mathbb{R}^d, \quad f(x) &= F(x, x).
    \end{align*}
\end{definition}

The iterations constructed from this surrogate function, namely:
\begin{equation}
    x^{(n+1)} = \argmin_x F(x, x^{(n)}),
    \label{eq:mm}
\end{equation}
can be proven to produce a sequence $(x^n)$ non-increasing with respect to the original data-fidelity function $f$. 
In the case of Poisson inverse problems, the surrogate function is often chosen as:
\begin{equation}
    F_{\text{EM}}(x, \tilde{x}) = \sum_i \left[ (Ax)_i - \sum_j y_i \tilde{\alpha}_{ij} \log \left( \frac{A_{ij} x_j}{\tilde{\alpha}_{ij} } \right) \right]
    \label{eq:surrogate}
\end{equation}
where $\tilde{\alpha}_{ij} = \frac{A_{ij} \tilde{x}_j}{(A\tilde{x})_i}$.
This majorant is notably employed in algorithms such as MLEM \cite{sheppMaximumLikelihoodReconstruction1982a} and KL-MU \cite{leeAlgorithmsNonnegativeMatrixa}.
For more details on the derivations and origins of this function, we refer readers to \cite{fevotteAlgorithmsNonnegativeMatrix2011}. 
Minimizing this tangent majorant leads to the well-known multiplicative updates:
\begin{equation}
    x^{(n+1)} = \frac{x^{(n)}}{s} \cdot A^\top \frac{y}{A x^{(n)}}
    \label{eq:rl-iter}
\end{equation}
where the multiplication $\cdot$ and division are element-wise, and $s = A^\top \mathbf{1}$ is a normalization term often called \textit{sensitivity} of the forward operator.
For completeness, the exact derivations leading to these updates can be found in Appendix \ref{sec:proof:lemma-mu}. 

\subsection{Plug-and-Play framework}
\label{subsec:methods:pnp}

PnP methods \cite{venkatakrishnanPlugPlayPriorsModel2013} replace the proximal operator associated with the regularization term $g$ by a denoiser $D_\theta$. Several proximal splitting algorithms, such as Half-Quadratic Splitting and ADMM, have been adapted to the PnP framework \cite{chanPlugandPlayADMMImage2017}. 

The recent work on GS denoisers \cite{huraultGradientStepDenoiser2021, cohenItHasPotential2021, huraultProximalDenoiserConvergent2022a} formulates the following explicit regularization:
\begin{equation}
    g_\sigma(x) = \|x - N_\sigma(x)\|^2,
    \label{eq:gradient-step}
\end{equation}
where $N_\sigma$ can be any differentiable neural network, taking the noise level $\sigma$ as an additional input. 
The denoiser is then formulated as an explicit gradient step over this regularization:
\begin{equation}
    D_\sigma (x) = x - \tau \nabla g_\sigma (x).
    \label{eq:gradient-step-denoiser}
\end{equation}
where $\tau$ is the gradient step-size. 
Hurault \textit{et al.} \cite{huraultGradientStepDenoiser2021} train this denoiser using the standard denoising score-matching loss and demonstrate that the resulting model achieves performance comparable to other state-of-the-art denoisers.
The gradient step denoiser allows to construct a convergent PnP forward-backward algorithm (GS-PnP-PGD) whose iterations are: 
\begin{equation}
    x^{(n+1)} = \text{Prox}_{\tau f} \circ (\text{Id} - \tau \lambda \nabla g_\sigma) (x^{(n)}).
    \label{eq:gs-pnp}
\end{equation}

Hurault \textit{et al}. \cite{huraultConvergentBregmanPlugandplay2024a} extended the GS-PnP framework to the Bregman geometry, enabling the handling of Poisson inverse problems with a denoiser specifically trained on noise related to inverse-gamma distributions. 

We depart from the Bregman geometry paradigm and instead use a Gaussian GS denoiser within the MM framework. The resulting iterates bear similarities to the approach proposed by Rond \textit{et al}. \cite{rondPoissonInverseProblems2016}. However, their method relies on an extension of ADMM rather than MM, and needs an inner optimization loop to approximate the proximal operator of $f$, limiting convergence guarantees. Another recent approach combined RL iterates with a diffusion model regularization, which proved efficient for deconvolution, but again without any convergence guarantees \cite{modrzykFastDeconvolutionUsing2024}.

\begin{figure*}[t]
    \centering
    \includegraphics[width=1\linewidth]{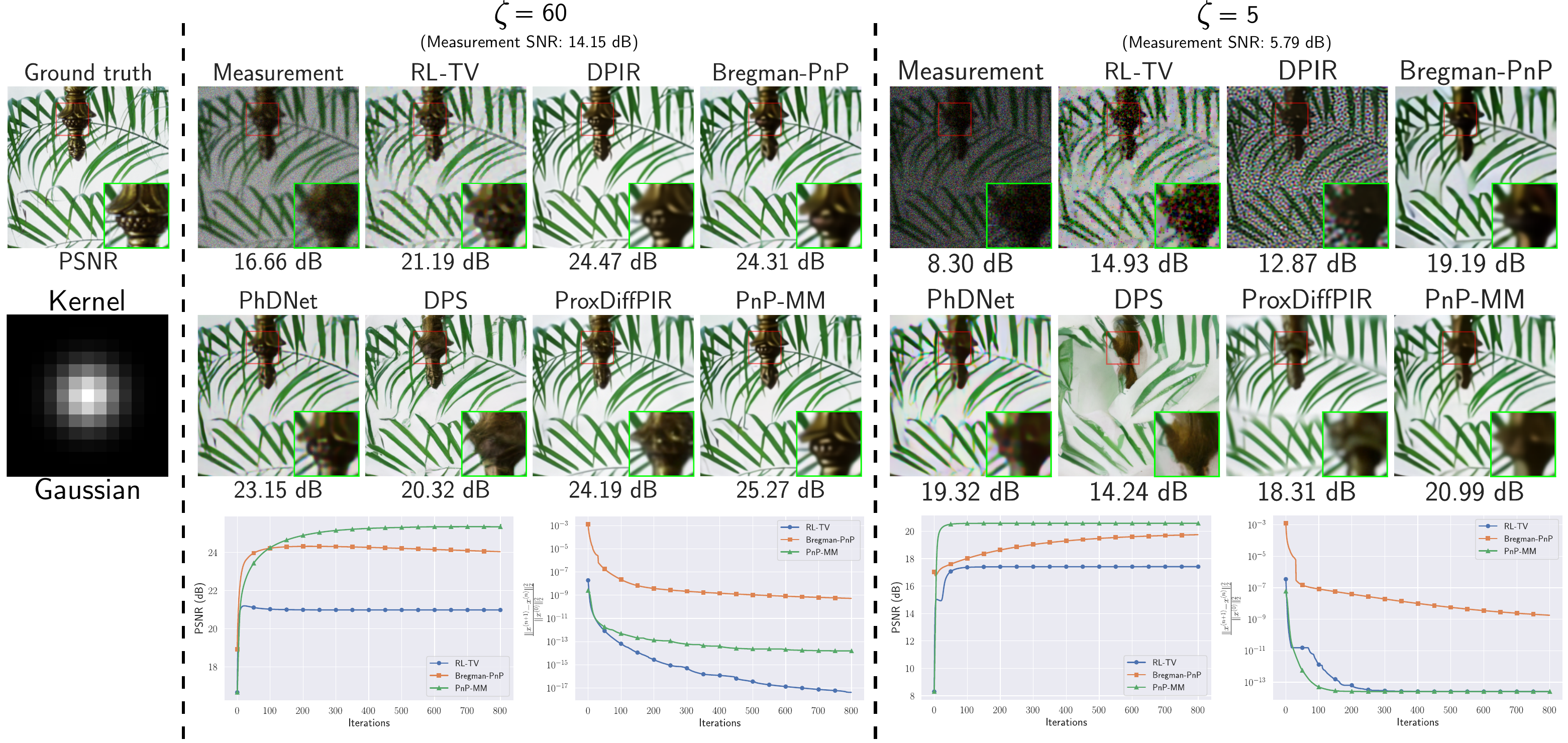} 
    \caption{Left: Deconvolution on the \textit{Leaves} image from the Set3c dataset with the Gaussian kernel and noise level $\zeta=60$. Right: Same image and kernel, with noise level $\zeta=5$. All images were produced from the same measurement data. Evolution of the PSNR and the gap between iterates have been plotted for theoretically convergent methods.}
    \label{fig:figure1}
\end{figure*}

\section{Plug-and-Play Majorization Minimization}
\label{sec:pnp-mm}

Although the standard FB algorithm for solving the regularized problem (\ref{eq:map}) can in itself be seen as an MM algorithm, here we adopt a different perspective by considering the scenario in which $\text{Prox}_{\tau f}$ is intractable.
In such cases, one may replace $f$ with a suitable tangent-majorant $F$ and proceed with updates very similar to the original FB algorithm.
The resulting algorithm, which we call \textit{Majorized Forward-Backward}, can then be described by the following iterations:
\begin{equation}
    x^{(n+1)} = \text{Prox}_{\tau F(\cdot, x^{(n)})} \circ (\text{Id} - \tau \lambda \nabla g)(x^{(n)}).
    \label{eq:mm-pgd}
\end{equation}
This general framework enables to replace the objective function $f$ with one's favorite surrogate.
We prove in Appendix \ref{sec:proof:pnp-convergence} that the algorithm has similar convergence properties as the original forward-backward splitting, as established in the following theorem.
\begin{theorem}
    Let $f : \mathbb{R}^d \rightarrow \mathbb{R} \cup \{+\infty\}$ and $g : \mathbb{R}^d \rightarrow \mathbb{R}$ be proper lower semicontinuous functions, with $f$ convex and $g$ differentiable with an $L$-Lipschitz gradient, and $\lambda > 0$ with $\tau < 1 / \lambda L$. Assume that $h= f+\lambda g$ is bounded from below. Let $F: \mathbb{R}^{2d} \rightarrow \mathbb{R} \cup \{+\infty\}$ be a proper lower semicontinuous tangent majorant of $f$, convex with respect to its first argument. Then:
    \begin{enumerate}
        \item The sequence $\left( h(x^{(n)}) \right)$, where $\left( x^{(n)} \right)$ is generated by (\ref{eq:mm-pgd}), is non-increasing and convergent.
        \item The residual $\min_{0 \leq n \leq N} \|x^{(n+1)} - x^{(n)}\|^2$ converges to 0 with a rate of $\mathcal{O}\left(\frac{1}{N}\right)$.
        \item All cluster points of $(x^{(n)})$ are stationary points of $h$.
    \end{enumerate}
    \label{th:convergence-pnp}
\end{theorem}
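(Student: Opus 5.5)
We follow the standard sufficient-decrease analysis of forward-backward splitting, with $f$ replaced by the surrogate $F(\cdot,x^{(n)})$. Write $z^{(n)} = x^{(n)} - \tau\lambda\nabla g(x^{(n)})$. The next iterate $x^{(n+1)}$ minimizes
\begin{equation*}
F(x,x^{(n)}) + \tfrac{1}{2\tau}\|x - z^{(n)}\|^2 .
\end{equation*}
This minimizer exists and is unique because $F(\cdot,x^{(n)})$ is proper, lsc and convex, so the objective is strongly convex.

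\textbf{Step 1: one-step descent inequality.} Compare the minimizer $x^{(n+1)}$ against the competitor $x^{(n)}$. Expanding the square gives
\begin{equation*}
F(x^{(n+1)},x^{(n)}) + \langle \lambda\nabla g(x^{(n)}), x^{(n+1)}-x^{(n)}\rangle + \tfrac{1}{2\tau}\|x^{(n+1)}-x^{(n)}\|^2 \le F(x^{(n)},x^{(n)}) = f(x^{(n)}).
\end{equation*}
Using $1/\tau$-strong convexity of the prox objective, the right-hand comparison can even be sharpened to a factor $\tfrac{1}{\tau}$ instead of $\tfrac{1}{2\tau}$. Next, the majorization property gives $f(x^{(n+1)})\le F(x^{(n+1)},x^{(n)})$. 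The descent lemma for the $L$-smooth $g$ gives
\begin{equation*}
\lambda g(x^{(n+1)})\le \lambda g(x^{(n)}) + \lambda\langle\nabla g(x^{(n)}),x^{(n+1)}-x^{(n)}\rangle + \tfrac{\lambda L}{2}\|x^{(n+1)}-x^{(n)}\|^2 .
\end{equation*}
Summing these yields
\begin{equation*}
h(x^{(n+1)}) + \Big(\tfrac{1}{\tau}-\tfrac{\lambda L}{2}\Big)\|x^{(n+1)}-x^{(n)}\|^2 \le h(x^{(n)}),
\end{equation*}
and the coefficient is positive since $\tau<1/(\lambda L)$. Item 1 follows, since $h$ is monotone and bounded below.

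\textbf{Step 2: rate.} Telescope the inequality from $0$ to $N$:
\begin{equation*}
\sum_{n=0}^N \|x^{(n+1)}-x^{(n)}\|^2 \le \frac{h(x^{(0)})-\inf h}{1/\tau-\lambda L/2}.
\end{equation*}
The minimum is at most the average, which gives the $\mathcal{O}(1/N)$ rate in item 2. It also shows $\|x^{(n+1)}-x^{(n)}\|\to 0$.

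\textbf{Step 3: cluster points.} This is the main obstacle. Let $x^{(n_k)}\to x^*$. Then $x^{(n_k+1)}\to x^*$ as well. The optimality condition reads
\begin{equation*}
\frac{x^{(n_k)}-x^{(n_k+1)}}{\tau} - \lambda\nabla g(x^{(n_k)}) \in \partial_1 F(x^{(n_k+1)},x^{(n_k)}).
\end{equation*}
The left side converges to $-\lambda\nabla g(x^*)$. We need to pass to the limit and conclude $-\lambda\nabla g(x^*)\in\partial f(x^*)$. The key observation is that $F(\cdot,\tilde x)-f\ge 0$ with equality at $\tilde x$, so subgradients of $F(\cdot,\tilde x)$ at the tangency point are subgradients of $f$ there. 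The difficulty is that $x^{(n_k+1)}\neq x^{(n_k)}$, so this cannot be applied directly. The limit step would need closedness of the graph of $(x,\tilde x)\mapsto\partial_1F(x,\tilde x)$, i.e. some joint regularity of $F$; lower semicontinuity of $F$ alone does not guarantee this. I would therefore argue via the subgradient inequality instead. For every $u$,
\begin{equation*}
F(u,x^{(n_k)}) \ge F(x^{(n_k+1)},x^{(n_k)}) + \langle v_k, u-x^{(n_k+1)}\rangle,
\end{equation*}
where $v_k$ denotes the left-hand side of the optimality condition.

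Lower-bounding by $f(x^{(n_k+1)})$ and using lsc of $f$ handles the right side. Passing to the limit on the left, $F(u,x^{(n_k)})\to F(u,x^*)$, requires continuity of $F$ in its second argument, which I expect to have to assume (it holds for $F_{\text{EM}}$ on the positive orthant). This yields $F(u,x^*)\ge f(x^*) + \langle -\lambda\nabla g(x^*),u-x^*\rangle$ for all $u$. Hence $-\lambda\nabla g(x^*)$ is a subgradient at $x^*$ of the convex function $F(\cdot,x^*)$, which touches $f$ from above at $x^*$.

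For differentiable $F(\cdot,x^*)$, the majorant and $f$ are tangent at $x^*$, so their gradients coincide (the Definition assumes $f$ differentiable), and we obtain $0=\nabla f(x^*)+\lambda\nabla g(x^*)$. In the nonsmooth case one would instead need $\partial_1F(x^*,x^*)\subseteq\partial f(x^*)$, which may require an additional hypothesis on $F$.
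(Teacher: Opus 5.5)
Your Steps 1 and 2 are correct and essentially match the paper. The paper gets item 1 by noting that $H(x,\tilde x)=F(x,\tilde x)+\lambda g(\tilde x)+\lambda\langle x-\tilde x,\nabla g(\tilde x)\rangle+\frac{1}{2\tau}\|x-\tilde x\|^2$ is a tangent majorant of $h$ that the iteration minimizes, so the MM lemma applies. For item 2 it proves your sufficient-decrease inequality with the weaker constant $\frac{1}{2\tau}-\frac{\lambda L}{2}$ and telescopes. Your strong-convexity sharpening to $\frac{1}{\tau}-\frac{\lambda L}{2}$ is valid and would even allow $\tau<2/(\lambda L)$.

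\textbf{Item 3.} The paper's proof only asserts that, because $F(\cdot,\tilde x)$ is convex, the standard proximal-gradient argument (Hurault et al., Theorem 1 iii) carries over. You are right that it does not. That argument uses the fact that the same function is proxed at every iteration, which lets $v_k\in\partial f(x^{(n_k+1)})$, or the fixed-point equation, pass to the limit. Here the function changes with $k$. The best one gets is $-\lambda\nabla g(x^*)\in\partial_1F(x^*,x^*)$, a set that always contains $\partial f(x^*)$ but can be strictly larger.

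So your proposal does not prove item 3 under the stated hypotheses, but no argument can: item 3 is false as written. Take $d=1$, $f\equiv0$, $F(x,\tilde x)=|x-\tilde x|$, $g(x)=\frac12\sqrt{1+x^2}$ (so $L=\frac12$), and $\lambda=\tau=1$. The increment $x^{(n+1)}-x^{(n)}$ is the soft-thresholding of $-g'(x^{(n)})$ at level $1$. Since $|g'|<\frac12$, this gives $x^{(n)}\equiv x^{(0)}$, which is not stationary when $x^{(0)}\neq0$.

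Your continuity requirement is also genuinely needed. Take $f\equiv0$ and let $F(x,\tilde x)=(x-\tilde x)^2/\tilde x^{2}$ for $\tilde x>0$ and $(x-\tilde x)^2$ otherwise. This $F$ is jointly lsc, smooth in $x$, and satisfies $\nabla_1F(\tilde x,\tilde x)=0=\nabla f(\tilde x)$. With $g(x)=\sqrt{1+(x+1)^2}$, $\lambda=1$ and $\tau=\frac12$, the iterates started at $x^{(0)}>0$ obey $x^{(n+1)}=x^{(n)}-\frac{g'(x^{(n)})\,(x^{(n)})^2}{2(1+(x^{(n)})^2)}$. They decrease to $0$, where $g'(0)=1/\sqrt2\neq0$.

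Hence the two hypotheses you add are exactly what the theorem is missing: upper semicontinuity of $F(u,\cdot)$, and $\partial_1F(x,x)\subseteq\partial f(x)$ (automatic when $F(\cdot,x)$ and $f$ are differentiable at $x$). With them, your Step 3 is a complete proof. Both hold for $F_{\text{EM}}$ at cluster points with $x^*>0$ componentwise, where $\nabla_1F_{\text{EM}}(x^*,x^*)=s-A^\top\frac{y}{Ax^*}=\nabla f(x^*)$.
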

This theorem allows the use of a wide range of regularizations $g$, including non-convex $L$-smooth ones. 

We will now specialize this general algorithm to the case of Poisson noise, with $f$ taking the expression described in (\ref{eq:poisson-likelihood}) and the canonical surrogate for the NLL defined in (\ref{eq:surrogate}). 
The proximal map of $F_{\text{EM}}$ admits a closed-form solution, whose computation is detailed in Appendix \ref{annex:prox-computation}. This means we can derive closed-form expressions for each minimization step (\ref{eq:mm-pgd}).
This leads to the following PnP-MM algorithm, where the first denoising step corresponds to a gradient update on the regularization $g_\sigma$ and the second step computes the proximal map of $F_{\text{EM}}(\cdot , x^{(n)})$:
\begin{equation*}
    \left\{
    \begin{array}{ll}
        x^{\left(n+\frac{1}{2}\right)} = \lambda \tau D_\sigma(x^{(n)}) + (1 - \lambda \tau) x^{(n)} \medskip \\ 
        x^{(n+1)} = \frac{1}{2} \left[ x^{\left(n+\frac{1}{2}\right)} - \tau s + \sqrt{\left( x^{\left(n+\frac{1}{2}\right)} - \tau s \right)^2 + 4 \tau s x_{\text{EM}}} \right]
    \end{array}
    \right.
    \label{eq:iterations}
\end{equation*}
where $x_{\text{EM}} = \displaystyle \frac{x^{\left(n+\frac{1}{2}\right)}}{s} \cdot A^\top \frac{y}{A x^{\left(n+\frac{1}{2}\right)}}$, which amounts to the regular MLEM update, and where $x^{(0)}$ is generally taken as a vector of ones.
A key property of this algorithm is that the iterates are given by taking the positive solution of a quadratic polynomial, and thus they respect the positivity constraint $x^{(n)} \geq 0$. Note that the convergence of the algorithm does not rely on the classical normalization assumption $A^\top 1 = 1$ often adopted in imaging problems.
Additionally, we apply a final denoising step similar to Hurault et \textit{al.} \cite{huraultGradientStepDenoiser2021}, to reduce residual noise.

\begin{table*}[t]
\caption{Comparison on the Kodak test dataset}
\label{tab:deblur-kodak-quantitative}
\resizebox{\textwidth}{!}{%
\label{tab:my-table}
\begin{tabular}{l|c|ccc|ccc|ccc|ccc|ccc}
\cline{2-14}
\multicolumn{1}{c|}{}         & Kernels                                                & \multicolumn{3}{c|}{Gaussian}                          & \multicolumn{3}{c|}{Levin1}                            & \multicolumn{3}{c|}{Levin4}                            & \multicolumn{3}{c|}{Uniform}                           & \multicolumn{3}{c}{Average}                            \\ \cline{1-14}
\multicolumn{1}{|c|}{$\zeta$} & Methods                                                & PSNR $\uparrow$ & SSIM $\uparrow$ & LPIPS $\downarrow$ & PSNR $\uparrow$ & SSIM $\uparrow$ & LPIPS $\downarrow$ & PSNR $\uparrow$ & SSIM $\uparrow$ & LPIPS $\downarrow$ & PSNR $\uparrow$ & SSIM $\uparrow$ & LPIPS $\downarrow$ & PSNR $\uparrow$ & SSIM $\uparrow$ & LPIPS $\downarrow$ \\ \cline{1-14}
\multicolumn{1}{|l|}{60}      & RL-TV \cite{maximTomographicReconstructionPoisson2023} & 24.78           & 0.610           & 0.425              & 23.33           & 0.520           & 0.493              & 24.00           & 0.544           & 0.440              & 23.80           & 0.568           & 0.474              & 23.98           & 0.561           & 0.458              \\
\multicolumn{1}{|l|}{}        & DPS \cite{chungDiffusionPosteriorSampling2022}         & 20.41           & 0.469           & 0.549              & 21.98           & 0.545           & 0.382              & 22.85           & 0.580           & 0.380              & 18.56           & 0.404           & 0.567              & 20.95           & 0.499           & 0.469              \\
\multicolumn{1}{|l|}{}        & Prox-DiffPIR \cite{melidonis2025scorebased}            & 26.04           & 0.688           & \textbf{0.337}     & 25.31           & 0.632           & \textbf{0.307}     & 26.31           & 0.676           & \textbf{0.255}     & 24.97           & {\ul 0.630}     & \textbf{0.391}     & 25.66           & 0.656           & \textbf{0.322}     \\
\multicolumn{1}{|l|}{}        & PhD-Net \cite{sanghviPhotonLimitedNonBlind2022}        & 26.00           & 0.685           & 0.420              & 24.56           & 0.608           & 0.424              & 25.81           & 0.665           & 0.424              & 24.70           & 0.618           & 0.503              & 25.26           & 0.644           & 0.455              \\
\multicolumn{1}{|l|}{}        & DPIR \cite{zhangPlugPlayImageRestoration2022}          & {\ul 26.46}     & {\ul 0.701}     & 0.403              & {\ul 26.00}     & 0.663           & 0.420              & {\ul 27.03}     & 0.711           & 0.350              & {\ul 24.78}     & 0.618           & 0.515              & {\ul 26.07}     & {\ul 0.673}     & 0.422              \\
\multicolumn{1}{|l|}{}        & B-PnP \cite{huraultConvergentBregmanPlugandplay2024a}  & 26.21           & 0.695           & 0.429              & 25.50           & {\ul 0.657}     & 0.427              & 26.93           & {\ul 0.716}     & 0.360              & 24.73           & 0.618           & 0.520              & 25.84           & 0.672           & 0.434              \\
\multicolumn{1}{|l|}{}        & \textbf{PnP-MM (Ours)}                                 & \textbf{26.69}  & \textbf{0.715}  & {\ul 0.382}        & \textbf{26.28}  & \textbf{0.748}  & {\ul 0.364}        & \textbf{27.64}  & \textbf{0.748}  & {\ul 0.300}        & \textbf{25.26}  & \textbf{0.640}  & {\ul 0.470}        & \textbf{26.47}  & \textbf{0.699}  & {\ul 0.379}        \\ \cline{1-14}
\multicolumn{1}{|l|}{20}      & RL-TV \cite{maximTomographicReconstructionPoisson2023} & 24.14           & 0.592           & 0.461              & 23.15           & 0.544           & 0.502              & 24.09           & 0.584           & 0.460              & 23.16           & 0.544           & {\ul 0.513}        & 23.64           & 0.566           & 0.484              \\
\multicolumn{1}{|l|}{}        & DPS \cite{chungDiffusionPosteriorSampling2022}         & 19.24           & 0.426           & 0.599              & 21.85           & 0.533           & \textbf{0.385}     & 23.13           & 0.576           & {\ul 0.363}        & 17.12           & 0.335           & 0.640              & 20.33           & 0.468           & 0.497              \\
\multicolumn{1}{|l|}{}        & Prox-DiffPIR \cite{melidonis2025scorebased}            & 25.11           & 0.642           & \textbf{0.382}     & 24.25           & 0.592           & {\ul 0.403}        & 25.17           & 0.632           & \textbf{0.347}     & 24.04           & 0.587           & 0.450              & 24.64           & {\ul 0.614}     & \textbf{0.396}     \\
\multicolumn{1}{|l|}{}        & PhD-Net \cite{sanghviPhotonLimitedNonBlind2022}        & 25.08           & {\ul 0.645}     & 0.468              & 23.80           & 0.576           & 0.539              & 24.84           & 0.624           & 0.480              & {\ul 23.96}     & 0.587           & 0.543              & 24.42           & 0.608           & 0.507              \\
\multicolumn{1}{|l|}{}        & DPIR \cite{zhangPlugPlayImageRestoration2022}          & {\ul 25.45}     & 0.605           & 0.458              & {\ul 24.49}     & 0.605           & 0.517              & {\ul 25.83}     & {\ul 0.656}     & 0.433              & 23.89           & 0.578           & 0.575              & {\ul 24.92}     & 0.623           & 0.496              \\
\multicolumn{1}{|l|}{}        & B-PnP \cite{huraultConvergentBregmanPlugandplay2024a}  & 24.45           & 0.614           & 0.589              & 24.18           & {\ul 0.609}     & 0.471              & 24.53           & 0.616           & 0.509              & 23.94           & {\ul 0.591}     & 0.585              & 24.28           & 0.608           & 0.538              \\
\multicolumn{1}{|l|}{}        & \textbf{PnP-MM (Ours)}                                 & \textbf{25.80}  & \textbf{0.676}  & {\ul 0.423}        & \textbf{25.12}  & \textbf{0.639}  & 0.430              & \textbf{26.29}  & \textbf{0.693}  & 0.366              & \textbf{24.54}  & \textbf{0.610}  & \textbf{0.510}     & \textbf{25.44}  & \textbf{0.654}  & {\ul 0.432}        \\ \cline{1-14}
\multicolumn{1}{|l|}{5}       & RL-TV \cite{maximTomographicReconstructionPoisson2023} & 22.15           & 0.482           & 0.591              & 21.59           & 0.454           & 0.598              & 21.52           & 0.438           & 0.614              & 22.10           & 0.497           & 0.611              & 21.84           & 0.468           & 0.603              \\
\multicolumn{1}{|l|}{}        & DPS \cite{chungDiffusionPosteriorSampling2022}         & 19.78           & 0.478           & 0.494              & 18.66           & 0.439           & 0.556              & 19.17           & 0.464           & 0.561              & 18.61           & 0.432           & \textbf{0.536}     & 19.06           & 0.453           & 0.537              \\
\multicolumn{1}{|l|}{}        & Prox-DiffPIR \cite{melidonis2025scorebased}            & 23.70           & {\ul 0.579}     & \textbf{0.468}     & {\ul 22.90}     & 0.534           & \textbf{0.512}     & {\ul 23.65}     & 0.573           & {\ul 0.470}        & {\ul 22.90}     & 0.531           & {\ul 0.538}        & {\ul 23.29}     & 0.554           & \textbf{0.497}     \\
\multicolumn{1}{|l|}{}        & PhD-Net \cite{sanghviPhotonLimitedNonBlind2022}        & {\ul 23.77}     & 0.592           & 0.534              & 22.73           & 0.536           & 0.597              & 23.43           & 0.568           & 0.551              & 22.86           & {\ul 0.546}     & 0.609              & 23.20           & 0.560           & 0.573              \\
\multicolumn{1}{|l|}{}        & DPIR \cite{zhangPlugPlayImageRestoration2022}          & 18.07           & 0.340           & 0.774              & 13.06           & 0.260           & 0.742              & 11.17           & 0.206           & 0.811              & 20.49           & 0.410           & 0.791              & 15.70           & 0.304           & 0.779              \\
\multicolumn{1}{|l|}{}        & B-PnP \cite{huraultConvergentBregmanPlugandplay2024a}  & 21.93           & 0.591           & 0.549              & 21.24           & {\ul 0.538}     & 0.599              & 21.98           & {\ul 0.591}     & 0.533              & 21.26           & 0.537           & 0.627              & 21.60           & {\ul 0.564}     & 0.577              \\
\multicolumn{1}{|l|}{}        & \textbf{PnP-MM (Ours)}                                 & \textbf{24.55}  & \textbf{0.623}  & {\ul 0.484}        & \textbf{23.51}  & \textbf{0.568}  & {\ul 0.536}        & \textbf{24.65}  & \textbf{0.625}  & \textbf{0.460}     & \textbf{23.49}  & \textbf{0.565}  & 0.577              & \textbf{24.05}  & \textbf{0.595}  & {\ul 0.514}        \\ \cline{1-14}
\end{tabular}
}
\end{table*}

In several imaging modalities, the measurement process is affected by both signal-dependent Poisson noise and additive electronic noise. An example of such an application is low-dose CT, where the photon statistics of the X-ray source induce Poisson fluctuations, while the acquisition chain contributes an additional approximately Gaussian component \cite{nuytsModellingPhysicsIterative2013}. In such cases, the observations are commonly expressed as a compound of Poisson and Gaussian noises where $z = y + \epsilon$ with $y \sim \frac{1}{\zeta}\mathcal{P}(\zeta Ax)$ and $\epsilon \sim \mathcal{N}(0, \sigma^2)$, where $y$ and $\epsilon$ are independent observations.
The likelihood of this Poisson-Gaussian distribution is not tractable. A widely adopted strategy is therefore to approximate the Poisson–Gaussian distribution by a Shifted Poisson (SP) model by defining $\hat{y} = z + \sigma^{2}$. The transformed measurement then admits the following approximation:
\begin{equation}
\hat{y} \sim \mathcal{P}(Ax +\sigma^{2}),
\label{eq:shifted-poisson}
\end{equation}
which yields a likelihood of the standard Poisson form up to a constant shift. Replacing the Poisson NLL by its SP counterpart results only in a small change of the surrogate for the data-fidelity term, and the update step becomes:
\begin{equation}
x_{\text{EM}} = \frac{x^{\left(n+\frac{1}{2}\right)}}{s}\cdot A^\top \frac{\hat{y}}{A x^{\left(n+\frac{1}{2}\right)} + \sigma^{2}},
\label{algo:poisson-gaussian}
\end{equation}
while the remaining steps of the algorithm remain unchanged. Because these iterates approximate $Ax + \sigma^2$, we subtract $\sigma^2$ to the final update of the PnP-MM algorithm.
The SP approximation requires $\hat{y}\geq 0$, a condition generally satisfied for realistic choices of $\zeta$ and $\sigma$. To strictly enforce this constraint in all cases, we set negative entries of $\hat{y}$ to zero before applying the algorithm.

\section{Experiments}
\label{sec:exp}
In this section, we assess the performance and versatility of PnP-MM across several imaging modalities and inverse problems.
We first examine Poisson deblurring with a range of blur kernels, using a Gaussian denoiser pre-trained on natural images. We then evaluate PnP-MM on PET reconstruction, retraining a denoiser on a small dataset of only ten volumes. Finally, we investigate the case of the mixed Poisson-Gaussian noise in a sparse-view CT reconstruction experiment, with the denoiser trained on a dataset with only five patients.

\subsection{Deblurring}
\label{subsec:exp:deblurring}

\subsubsection{Datasets}

We used the pre-trained GS denoiser proposed by Hurault \textit{et al.} \cite{huraultGradientStepDenoiser2021} and the training dataset introduced by Zhang \textit{et al.} in the DPIR paper \cite{zhangPlugPlayImageRestoration2022}, which is a combination of 400 images from the Berkeley Segmentation Dataset (CBSD) \cite{martinDatabaseHumanSegmented2001}, 4,744 images from the Waterloo Exploration Database \cite{maWaterlooExplorationDatabase2017}, 900 images from the DIV2K dataset \cite{agustssonNTIRE2017Challenge2017}, and 2,750 images from the Flickr2K dataset \cite{limEnhancedDeepResidual2017}.

We optimized the hyper-parameters of our method on the CBSD68 dataset \cite{huraultGradientStepDenoiser2021}, which serves as a validation set. The tests were carried out on two independent datasets to avoid hyper-parameter overfitting. The first testing dataset is the Kodak dataset \cite{franzenKodak}, consisting of 24 images, while the second is the well-known Set3C dataset. All the validation and testing images were center-cropped to the size $256\times256$. 

\subsubsection{Experimental setting}

We followed the same experimental setting as the Bregman-PnP method \cite{huraultConvergentBregmanPlugandplay2024a}. We used 4 blur kernels,  namely  a $25 \times 25$ Gaussian blur kernel with a standard deviation of 1.6, a uniform blur kernel of size $9 \times 9$, and 2 real-world motion blur kernels taken from \cite{levinUnderstandingEvaluatingBlind2009a}. All kernels can be found in Annex \ref{annex:tab:deblur-cbsd68-quantitative}. We considered four different noise levels $\zeta=5,20, 40, 60$ and we simulated Poisson data $y\sim \frac{1}{\zeta}\mathcal{P}(\zeta Ax)$. Note that the lowest SNR corresponds to $\zeta =5$ and the highest to $\zeta = 60$. These experiments aim to investigate the robustness of the selected methods to varying amounts of noise. 

\subsubsection{Methods}

We compare our method against representative state-of-the-art approaches for Poisson deconvolution. As classical baselines, we include TV-regularized Richardson–Lucy (RL–TV), implemented via a dual formulation tailored to Poisson noise \cite{maximTomographicReconstructionPoisson2023}, and DPIR, a Plug-and-Play half-quadratic splitting method originally designed for Gaussian noise \cite{zhangPlugPlayImageRestoration2022}. We further evaluate an unrolled three-operator splitting network specifically developed for photon-limited deblurring \cite{sanghviPhotonLimitedNonBlind2022}, as well as the convergent Bregman-PnP method of Hurault \textit{et al.}, which extends the Bregman Proximal Gradient descent \cite{bauschkeDescentLemmaLipschitz2017} to the Plug-and-Play setting \cite{huraultConvergentBregmanPlugandplay2024a}.
In addition, we consider two diffusion-based approaches: Diffusion Posterior Sampling (DPS) \cite{chungDiffusionPosteriorSampling2022}, implemented with the Poisson-stabilization strategy of \cite{melidonis2025scorebased}, and the score-based method of Melidonis \textit{et al.} \cite{melidonis2025scorebased}, which replaces the DPS likelihood gradient with a proximal update adapted to Poisson statistics.

\begin{figure*}[h]
    \centering
    \includegraphics[width=1\linewidth]{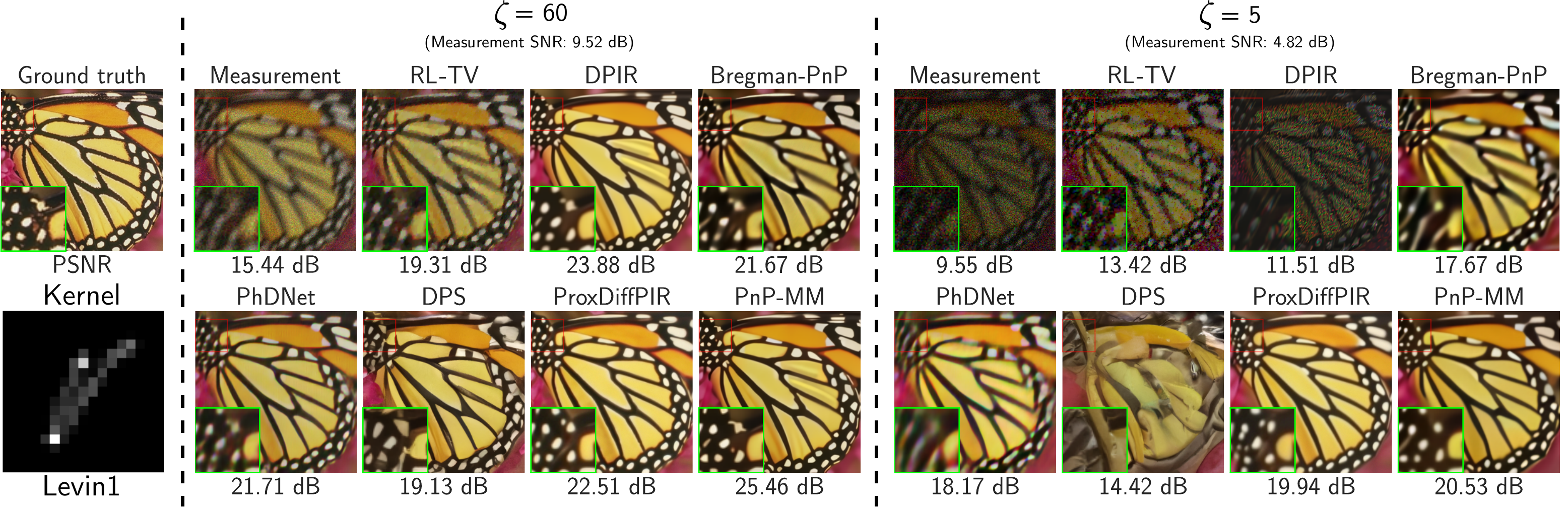} 
    \caption{Left: Deconvolution on the \textit{Butterfly} image from the Set3c dataset with the first Levin real world motion blur kernel and noise level $\zeta=60$. Right: Same image and kernel, with noise level $\zeta=5$.}
    \label{fig:figure2}
\end{figure*}

\subsubsection{Metrics}

To assess the performance of the compared methods, we employed three widely used image quality metrics: Peak Signal-to-Noise Ratio (PSNR), Structural Similarity Index Measure (SSIM) \cite{wangImageQualityAssessment2004}, and Learned Perceptual Image Patch Similarity (LPIPS) \cite{zhangUnreasonableEffectivenessDeep2018}. PSNR measures pixel-wise differences between the reconstructed and original images, favoring reconstructions with lower pixel-level errors, even if they appear overly smooth or perceptually less realistic. In contrast, SSIM and LPIPS focus on perceptually relevant differences, with LPIPS leveraging the deep features of a neural network to evaluate perceptual similarity. However, LPIPS and SSIM alone may struggle to account for hallucinated details that can appear in reconstructions.

\subsubsection{Hyper-parameter tuning}

Our algorithm includes several tunable hyper-parameters that significantly influence the quality of the results. Among these, our experiments identified the noise level $\sigma$ of the GS denoiser and the regularization parameter $\lambda$ as the most critical. Although its impact is less significant, the gradient step size $\tau$ was also optimized. To ensure a fair comparison, we adopted the methodology of Hurault \textit{et al.} \cite{huraultConvergentBregmanPlugandplay2024a}, tuning all three parameters using the CBSD68 dataset and all blur kernels, for each noise level. During the grid search over the hyperparameter space, the algorithm was run for 800 iterations to ensure convergence across all noise levels and images. 

We restricted the range of hyper-parameters to stay within the convergence bounds. We initially implemented a back-tracking strategy for the descent step-size $\tau$ but found that it never activates in our experiments when $\tau \leq 1$. Additionally, the GS denoiser is shown in \cite{huraultGradientStepDenoiser2021} (Appendix B) to have a local Lipschitz constant slightly larger than 1 for most images. This leads us to simply constrain $\tau, \lambda \in (0, 1)$ to satisfy the condition $\tau \lambda < \frac{1}{L}$. Regarding the choice of the noise parameter $\sigma$ for the regularization $g_\sigma$, since the denoiser was trained on noise levels $\sigma \in [0, 50 / 255]$, we maintained the same bounds during the grid search. 
Finally, we also conducted a grid search to determine the optimal regularization hyperparameter for the RL-TV method as well as the DPIR method's $\sigma$ parameter at each noise level, under the same conditions as previously outlined. A summary of the optimal hyper-parameters for each noise level is provided in Table \ref{annex:tab:deblur-hyperparam} of Appendix \ref{annex:hyperparams}.


\subsubsection{Qualitative and quantitative results}

Qualitatively, our method achieves substantially better detail recovery than all competing approaches across both low- and high-noise regimes, as shown in Figure \ref{fig:figure1}.
These observations are consistent across the validation and test datasets, with quantitative results provided in Table \ref{tab:deblur-kodak-quantitative}. Additionnal quantitative results on the CBSD69 dataset can be found in Appendix \ref{annex:additional-exp} Table \ref{annex:tab:deblur-cbsd68-quantitative}.
Our PnP-MM algorithm outperforms all baselines in every experiment, establishing a new state-of-the-art for deblurring under Poisson noise. On average, it achieves the highest PSNR and the best perceptual scores, in line with the visual quality observed on individual reconstructions. We also note that our method is particularly stable to high noise levels.
Among the competing methods, DPIR \cite{zhangPlugPlayImageRestoration2022} still performs reasonably well under mild noise and often ranks second.
The unrolled PhDNet architecture \cite{sanghviPhotonLimitedNonBlind2022} is both fast and inexpensive to run in terms of memory. It has no hyper-parameters, avoiding the need for time-consuming tuning. Finally, it yields fair reconstructions across all noise levels, although it remains consistently below the performance of our method.
DPS \cite{chungDiffusionPosteriorSampling2022} adapted to Poisson noise, is still unstable in low SNR regimes and outputs many hallucinated details which results in poor performance in terms of PSNR. We found the ProxDiffPIR method \cite{melidonis2025scorebased} gives far superior reconstructions than DPS at all noise levels, at the cost of increased computations: it performs an inner optimization loop at each sampling step, and averages the output over 3 samples. In average, our method outperforms the convergent Bregman-PnP \cite{huraultConvergentBregmanPlugandplay2024a} by more than 2dB on high Poisson noise ($\zeta=5)$.

\subsection{Positron Emission Tomography}
\label{subsec:exp:pet}

\begin{figure*}[h]
    \includegraphics[width=\linewidth]{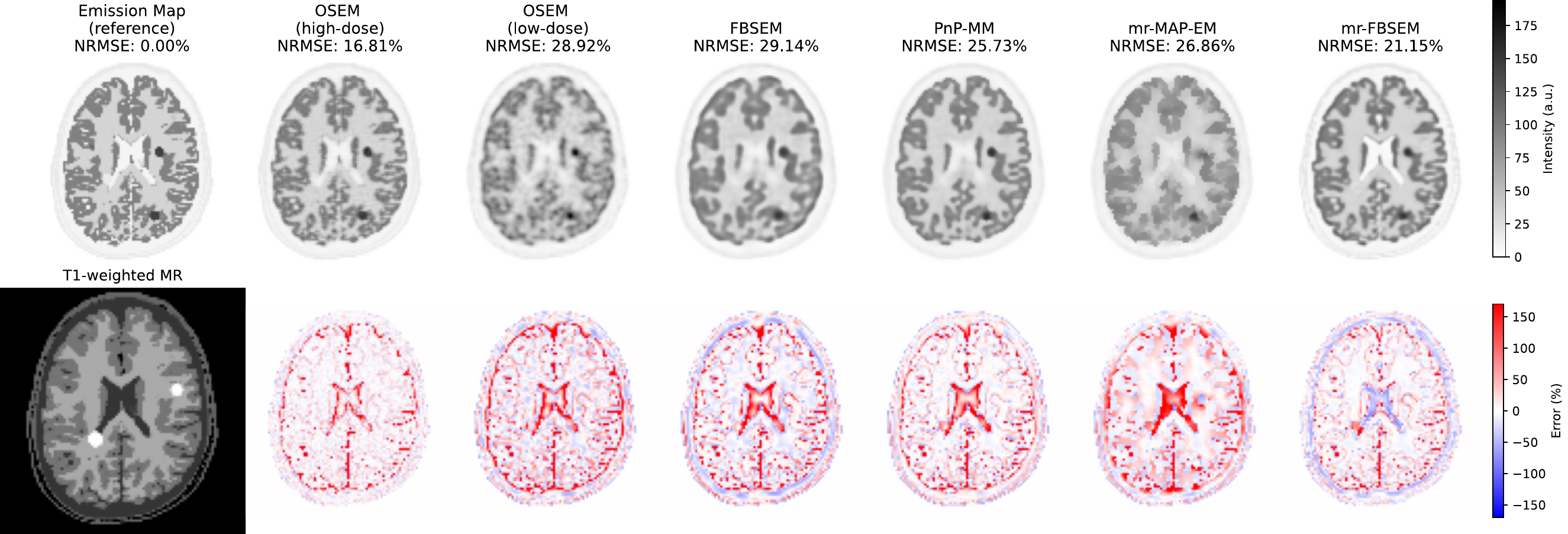} 
    \caption{Top row: PET reconstructions of a representative mid-axial slice from the BrainWeb test dataset, including two hot lesions. Bottom row: MR slice and the corresponding relative error maps for each reconstruction algorithm with respect to the emission map.}
    \label{fig:pet-figure1}
\end{figure*}

\subsubsection{Dataset and experimental setting}

We used the BrainWeb phantom dataset. The first 10 phantoms were used for training and the remaining 10 for testing. For each phantom, we inserted 15 circular cold and hot lesions with radius ranging from 2 to 8 mm at random spatial locations. Hot lesions have 1.5 times the intensity of surrounding gray matter while cold lesions have half the intensity of surrounding white matter. The training set is composed of 100 axial slices of size 172$\times$172 pixels taken from 10 volumes, and is augmented by applying 5 random in-plane rotations sampled uniformly within $\pm 15^\circ$.
This yields 500 slices for training. The same procedure was applied to generate 500 slices for testing. For every slice, we simulated a 100M-count reference sinogram and a low-dose sinogram with counts varying from 1M to 10M.
We took inspiration from the simulation protocol of Mehranian et \textit{al.} \cite{mehranianModelBasedDeepLearning2021}.
All projections were simulated following the geometry of the Siemens Biograph mMR scanner, including point-spread-function (PSF) modeling, attenuation, and normalization.
The reference sinogram uses a 2.5 mm full-width-at-half-maximum (FWHM) PSF and is reconstructed using 15 iterations of the Ordered Subset Expectation Maximization (OSEM) algorithm \cite{hudsonAcceleratedImageReconstruction1994} with 14 subsets. The low-dose data uses a 4 mm FWHM PSF and is reconstructed using 10 OSEM iterations with 14 subsets. 
Random lesions are also added to the T1-weighted MR image associated with each phantom. During reconstruction, the low-dose PSF modeling, attenuation correction and normalization are taken into account.

\begin{figure}[h]
    \centering
    \includegraphics[width=\linewidth]{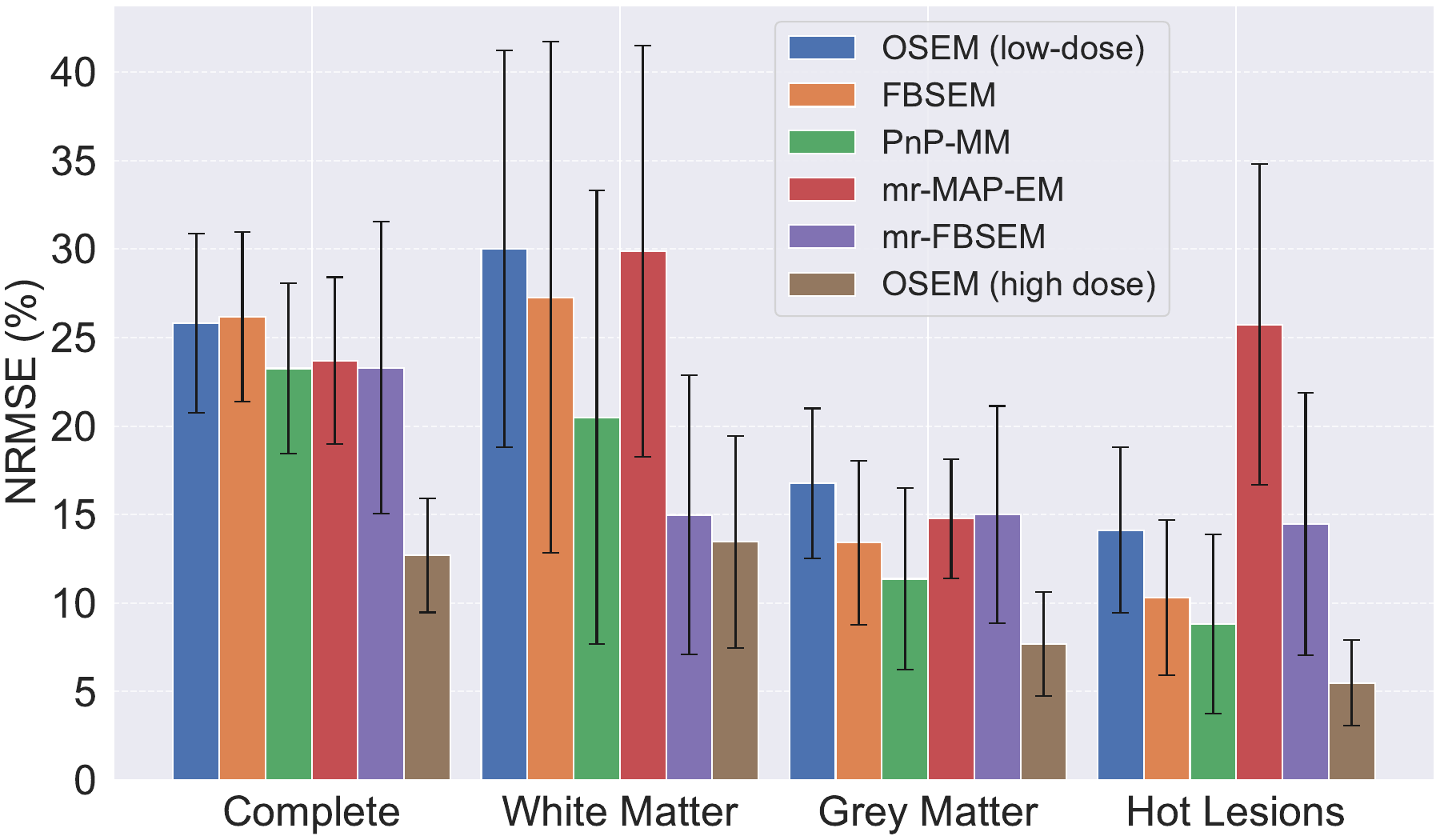}
    \caption{NRMSE values for each method and for different region of interest  on the Brainweb test dataset}
    \label{fig:pet-figure2}
\end{figure}

\subsubsection{Denoiser}
For this experiment, we retrain a Gaussian GS-denoiser on the PET training dataset.
In contrast to the procedure of \cite{zhangPlugPlayImageRestoration2022}, we do not normalize the images to the $\left[0, 1\right]$ range. While such normalization is appropriate for natural images, it is not well suited to PET data, where voxel intensities correspond to Standardized Uptake Values (SUV) and depend on the underlying count level or, in clinical settings, on uptake time. Normalizing each high-count reconstruction independently would significantly distort the dynamic range and introduce large, non-physical variations in contrast across samples.
Instead, we trained the denoiser directly on raw SUV values. The noise amplitudes were chosen to approximately match the signal-to-noise ratios used in natural-image denoising: while natural images typically span $[0, 255]$ with Gaussian noise levels $\sigma \in [0, 50]$, SUV values in our dataset exhibit a slightly smaller dynamic range. We therefore adopt Gaussian noise standard deviations in the range $\sigma \in [0, 40]$. We trained the neural network on patches of size 128$\times$128 for 500 epochs with an $L_2$ loss using Adam. The training took a total of 4 hours on a Quadro RTX 8000 GPU. The resulting denoiser was evaluated on the test set across multiple Gaussian noise levels, as reported in Appendix \ref{annex:additional-exp}.

\begin{figure}[h]
    \centering
    \includegraphics[width=\linewidth]{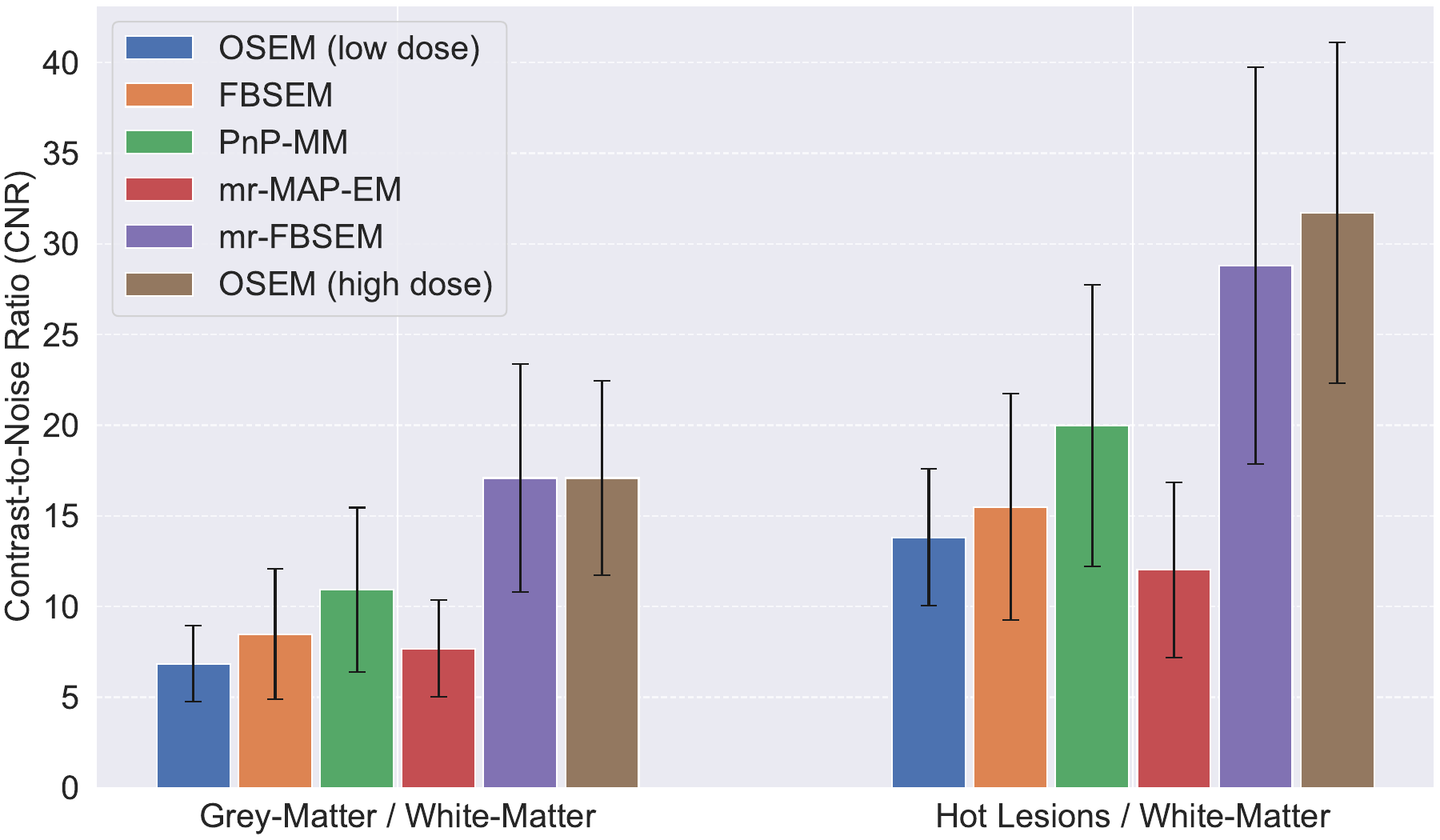}
    \caption{Contrast-to-Noise values for each method on the Brainweb test dataset}
    \label{fig:pet-figure3}
\end{figure}

\begin{figure*}[t]
    \includegraphics[width=\linewidth]{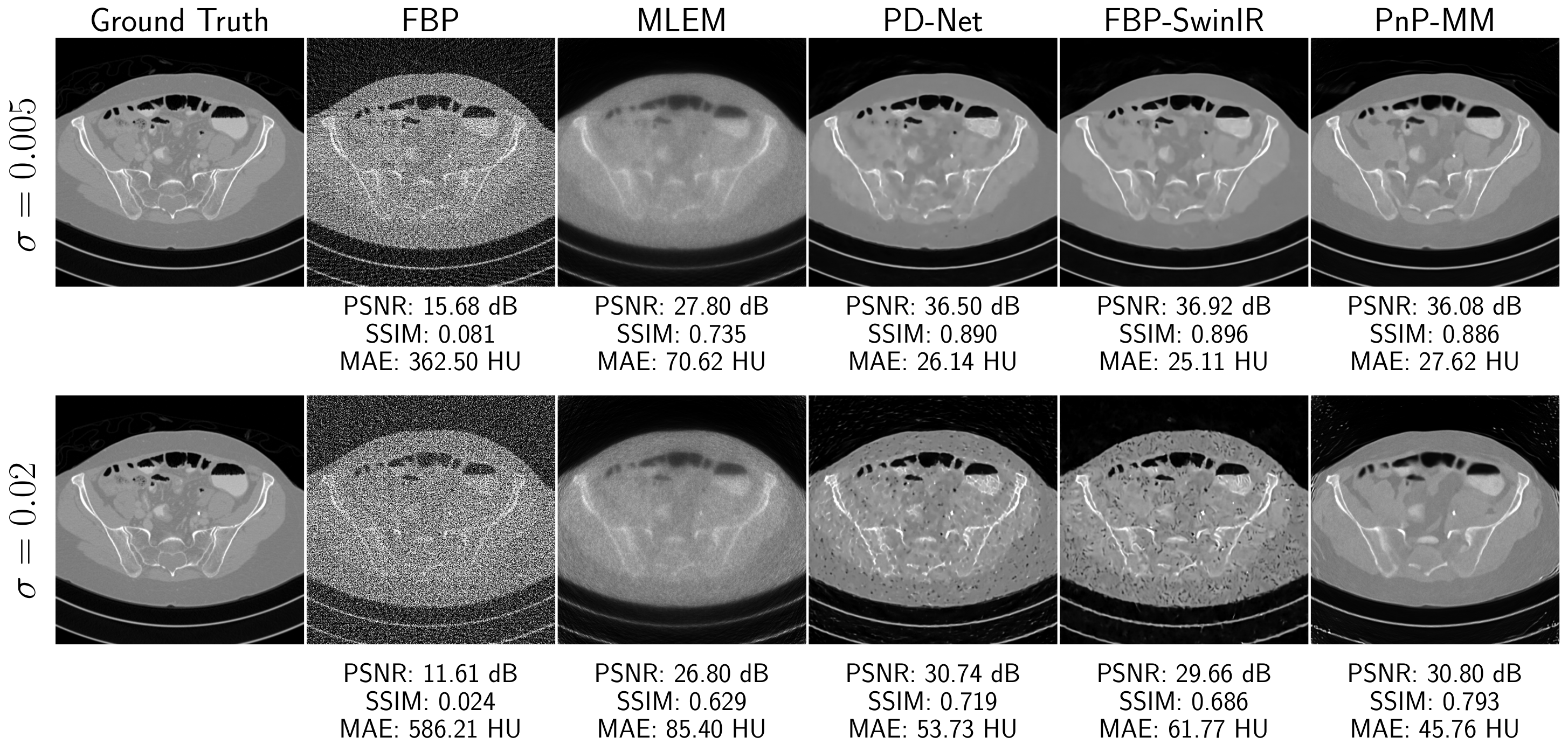} 
    \caption{Reconstruction of a test slice with the different methods. Top row: 128 views with the lowest Gaussian noise level, which was used to train the fully-supervised methods PD-Net and FBP-SwinIR. Bottom row: 128 views with the highest Gaussian noise level, which was never seen during the training of the fully supervised methods.}
    \label{fig:ct-figure1}
\end{figure*}

\subsubsection{Methods}
We compared against common methods in PET reconstruction as well as recent unrolled approaches. First, we considered  the standard OSEM algorithm, an accelerated variant of MLEM that processes ordered subsets of the projection data instead of the full dataset \cite{hudsonAcceleratedImageReconstruction1994}.
We also evaluated a regularized reconstruction using a Bowsher prior computed from the corresponding T1-weighted MR image \cite{bowsherUtilizingMRIInformation2004} which we denote as mr-MAP-EM.
This method is close to ours in the sense that it is also based on the EM majorant. It differs by the use of $x^{(n)}$ in the computation of $x_{\text{EM}}$, as in DePierro's algorithm \cite{pierroModifiedExpectationMaximization1995}. Finally, we compared against a state-of-the-art deep-learning approach that unrolls these iterations \cite{mehranianModelBasedDeepLearning2021}.
This framework has two variants: mr-FBSEM, which incorporates T1-MR information, and FBSEM, which uses PET data alone. For mr-FBSEM we used the publicly released pretrained weights of the 2D implementation \cite{mehranianModelBasedDeepLearning2021}. As the weights for the FBSEM unrolled network were not available, we retrained it on the training dataset.

\subsubsection{Metrics}
We first computed the Root Normalized Mean Squared Error (RNMSE) between reconstructions and the emission map over the whole image, as well as only on hot lesions, grey and white matter regions. We also computed the Contrast to Noise Ratio (CNR) between grey and white matter, by taking the average activity in the grey matter area minus the average activity in the white matter area divided by the standard deviation in the white matter area. The CNR was also evaluated for hot lesions over white matter.

\subsubsection{Hyper-parameter tuning}

For this experiment, we tuned the hyperparameters of our method using five validation slices and taking the set of hyper-parameters yielding the best overall RMNSE.  
We observed that the algorithm converges slowly when the step size was constrained to values below 1.
To mitigate this, we retained a unit step size for the GS-denoiser update but increased the step size $\tau$ of the data-fidelity update to 100.
This yields a substantial acceleration in convergence without introducing noticeable numerical instabilities. All hyper-parameters are reported in Table \ref{annex:tab:pet-hyperparam} of Appendix \ref{annex:hyperparams}. 

\subsubsection{Qualitative and quantitative results}

Qualitative results are presented in Figure \ref{fig:pet-figure1}.
Our method achieves improved recovery of lesion details compared with FBSEM and outperforms the Bowsher-prior reconstruction, as reported in Figure \ref{fig:pet-figure2}, despite the latter leveraging MR information. As expected, mr-FBSEM, which exploits MR images during both training and inference, produces sharper anatomical structures and higher overall contrast. However, SUV values are overestimated in some slices, leading to very high NRMSE values, as evidenced by the large standard deviations reported for full images in Figure \ref{fig:pet-figure2}. 
Quantitatively, our method yields the lowest NRMSE on both full slices and region-of-interest evaluations, including grey matter and hot lesions. While the CNR between grey and white matter remains lower than that of MR-guided FBSEM, a substantial improvement is observed over the PET-only variant (Figure \ref{fig:pet-figure3}).

\subsection{Computed Tomography}
\label{subsec:exp:ct}

\subsubsection{Datasets \& Experimental setting}

\begin{table*}[h]
\caption{Comparison on AAP Mayo test dataset}
\label{tab:ct-mayo}
\resizebox{\textwidth}{!}{%
\begin{tabular}{lc|ccc|ccc|ccc|}
\multicolumn{1}{c}{}           &                                                       & \multicolumn{3}{c|}{192 angles}                                                   & \multicolumn{3}{c|}{128 angles}                                                   & \multicolumn{3}{c|}{64 angles}                                                   \\ \hline
\multicolumn{1}{|c|}{$\sigma$} & Methods                                               & PSNR $\uparrow$           & SSIM $\uparrow$            & MAE (HU) $\downarrow$    & PSNR $\uparrow$           & SSIM $\uparrow$            & MAE (HU) $\downarrow$    & PSNR $\uparrow$           & SSIM $\uparrow$            & MAE (HU) $\downarrow$   \\ \hline
\multicolumn{1}{|l|}{0.005}    & FBP                                                   & 17.39 $\pm$ 0.72          & 0.119 $\pm$ 0.023          & 290 $\pm$ 31.2           & 16.60 $\pm$ 0.71          & 0.103 $\pm$ 0.021          & 318 $\pm$ 34.0           & 15.07 $\pm$ 0.66          & 0.077 $\pm$ 0.017          & 378 $\pm$ 38.9          \\
\multicolumn{1}{|l|}{}         & MLEM \cite{langeEMReconstructionAlgorithms1984a}      & 28.72 $\pm$ 1.06          & 0.772 $\pm$ 0.028          & 62.7 $\pm$ 10.2          & 28.67 $\pm$ 1.06          & 0.765 $\pm$ 0.028          & 63.1 $\pm$ 10.3          & 28.27 $\pm$ 1.04          & 0.735 $\pm$ 0.032          & 66.7 $\pm$ 10.7         \\
\multicolumn{1}{|l|}{}         & PDNet \cite{adlerLearnedPrimalDualReconstruction2018} & {\ul 37.28 $\pm$ 0.97}    & {\ul 0.897 $\pm$ 0.018}    & {\ul 24.7 $\pm$ 3.2}     & {\ul 37.05 $\pm$ 0.99}    & {\ul 0.893 $\pm$ 0.018}    & {\ul 25.2 $\pm$ 3.2}     & {\ul 34.58 $\pm$ 1.08}    & {\ul 0.853 $\pm$ 0.022}    & {\ul 33.4 $\pm$ 4.3}    \\
\multicolumn{1}{|l|}{}         & FBP-SwinIR \cite{Liu_2021_ICCV}                       & \textbf{37.66 $\pm$ 1.01} & \textbf{0.903 $\pm$ 0.018} & \textbf{23.50 $\pm$ 3.1} & \textbf{37.31 $\pm$ 1.01} & \textbf{0.898 $\pm$ 0.018} & \textbf{24.4 $\pm$ 3.2}  & 31.82 $\pm$ 1.53          & 0.796 $\pm$ 0.040          & 45.3 $\pm$ 8.0          \\
\multicolumn{1}{|l|}{}         & PnP-MM                                                & 36.40 $\pm$ 1.09          & 0.892 $\pm$ 0.017          & 26.53 $\pm$ 3.3          & 36.44 $\pm$ 0.94          & 0.888 $\pm$ 0.016          & 26.9 $\pm$ 3.1           & \textbf{35.64 $\pm$ 1.19} & \textbf{0.873 $\pm$ 0.020} & \textbf{28.6 $\pm$ 4.0} \\ \hline
\multicolumn{1}{|l|}{0.01}     & FBP                                                   & 15.58 $\pm$ 0.52          & 0.077 $\pm$ 0.014          & 358 $\pm$ 30.9           & 15.07 $\pm$ 0.53          & 0.069 $\pm$ 0.013          & 380 $\pm$ 33.4           & 13.97 $\pm$ 0.52          & 0.055 $\pm$ 0.011          & 431 $\pm$ 37.9          \\
\multicolumn{1}{|l|}{}         & MLEM \cite{langeEMReconstructionAlgorithms1984a}      & 28.39 $\pm$ 0.99          & 0.744 $\pm$ 0.027          & 67.2 $\pm$ 10.1          & 28.37 $\pm$ 1.0           & 0.739 $\pm$ 0.027          & 67.0 $\pm$ 10.1          & 28.14 $\pm$ 1.01          & 0.723 $\pm$ 0.028          & 68.6 $\pm$ 10.5         \\
\multicolumn{1}{|l|}{}         & PDNet \cite{adlerLearnedPrimalDualReconstruction2018} & {\ul 36.12 $\pm$ .87}     & {\ul 0.876 $\pm$ 0.018}    & {\ul 28.7 $\pm$ 3.1}     & {\ul 35.97 $\pm$ 0.9}     & {\ul 0.873 $\pm$ 0.018}    & {\ul 29.0 $\pm$ 3.1}     & {\ul 33.64 $\pm$ 1.00}    & {\ul 0.824 $\pm$ 0.020}    & {\ul 37.9 $\pm$ 4.2}    \\
\multicolumn{1}{|l|}{}         & FBP-SwinIR \cite{Liu_2021_ICCV}                       & \textbf{36.85 $\pm$ 0.99} & \textbf{0.892 $\pm$ 0.019} & \textbf{26.1 $\pm$ 3.3}  & \textbf{36.41 $\pm$ 0.96} & \textbf{0.884 $\pm$ 0.020} & \textbf{27.4 $\pm$ 3.36} & 30.74 $\pm$ 1.14          & 0.761 $\pm$ 0.044          & 51.9 $\pm$ 8.6          \\
\multicolumn{1}{|l|}{}         & PnP-MM                                                & 35.06 $\pm$ 0.85          & 0.873 $\pm$ 0.017          & 31.0 $\pm$ 3.2           & 35.17 $\pm$ 0.93          & 0.870 $\pm$ 0.018          & 30.4 $\pm$ 3.4           & \textbf{34.92 $\pm$ 1.10} & \textbf{0.862 $\pm$ 0.020} & \textbf{30.6 $\pm$ 3.9} \\ \hline
\multicolumn{1}{|l|}{0.02}     & FBP                                                   & 12.22 $\pm$ 0.26          & 0.031 $\pm$ 0.005          & 528 $\pm$ 31.9           & 12.01 $\pm$ 0.27          & 0.029 $\pm$ 0.005          & 542 $\pm$ 33.4           & 11.49 $\pm$ 0.28          & 0.025 $\pm$ 0.005          & 575 $\pm$ 36.3          \\
\multicolumn{1}{|l|}{}         & MLEM \cite{langeEMReconstructionAlgorithms1984a}      & 27.38 $\pm$ 0.81          & 0.664 $\pm$ 0.025          & 79.2 $\pm$ 9.6           & 27.46 $\pm$ 0.83          & 0.663 $\pm$ 0.025          & 77.6 $\pm$ 9.7           & 27.42 $\pm$ 0.87          & 0.660 $\pm$ 0.026          & 76.7 $\pm$ 10.1         \\
\multicolumn{1}{|l|}{}         & PDNet \cite{adlerLearnedPrimalDualReconstruction2018} & 30.17 $\pm$ 0.36          & 0.704 $\pm$ 0.018          & 56.25 $\pm$ 2.9          & 30.62 $\pm$ 0.49          & 0.720 $\pm$ 0.014          & 53.3 $\pm$ 2.9           & {\ul 30.14 $\pm$ 0.78}    & {\ul 0.705 $\pm$ 0.016}    & {\ul 56.9 $\pm$ 4.62}   \\
\multicolumn{1}{|l|}{}         & FBP-SwinIR \cite{Liu_2021_ICCV}                       & {\ul 31.62 $\pm$ 0.67}    & {\ul 0.768 $\pm$ 0.040}    & {\ul 47.64 $\pm$ 5.2}    & {\ul 30.66 $\pm$ 0.71}    & {\ul 0.738 $\pm$ 0.047}    & {\ul 53.1 $\pm$ 6.3}     & 26.67 $\pm$ 0.89          & 0.606 $\pm$ 0.061          & 84.9 $\pm$ 10.9         \\
\multicolumn{1}{|l|}{}         & PnP-MM                                                & \textbf{31.99 $\pm$ 0.64} & \textbf{0.842 $\pm$ 0.021} & \textbf{40.14 $\pm$ 3.9} & \textbf{32.17 $\pm$ 0.65} & \textbf{0.831 $\pm$ 0.020} & \textbf{39.0 $\pm$ 3.8}  & \textbf{31.47 $\pm$ 0.72} & \textbf{0.802 $\pm$ 0.020} & \textbf{41.3 $\pm$ 4.2} \\ \hline
\end{tabular}
}

\end{table*}

We conducted all our experiments on the training split of the 2016 NIH-AAPM-Mayo Clinic Low Dose CT Grand Challenge dataset \cite{mccolloughLowdoseCTDetection2017}. This enables us to use high-quality full-dose CT scans to compute metrics. We selected 2886 full dose slices of size 512$\times$512 pixels, corresponding to the first 5 patients, all acquired with a 1 mm slice thickness and a B30 reconstruction kernel. The remaining five patients were reserved for testing, yielding a total of 3,050 test slices. For each slice, we generated six sinograms using 64, 128, and 192 parallel projection angles with a normalized forward operator.
For every acquisition setting, we simulated photon noise with parameter $\zeta = 1 / 3000$ and additive electronic noise modeled as Gaussian noise with standard deviation equal to $\sigma=0.5\%, 1\%, 2\%$ of the measured signals.
These experiments were designed to assess the robustness of our method to variations in the forward operator and to the presence of additional Gaussian noise, which would not be negligible in a low-dose acquisition.
We note, however, that real CT transmission physics modifies the forward model to $\exp(-Ax)$. Adapting the majorizing function to this setting is nontrivial and lies beyond the scope of the present work.

\subsubsection{Denoiser}

For CT it is common to clip the images between a fixed Housefield Units (HU) range. For these experiments, we clip all slices to the interval $[-1000, 2000]$. This enables us to rescale all images to $[0, 1]$ without running into normalization issues, and to use the same Gaussian noise levels as in the natural image setting. We retrained a DRUNet and a GS-DRUNet on the training split, for 500 epochs, with Gaussian noise in range $[0, 50 / 255]$. We noticed that training on patches of size 256$\times$256 instead of 128$\times$128 significantly improves the denoising performances. The training takes around 48 hours on a NVIDIA RTX A6000 GPU.

\subsubsection{Methods}

We compared our method against both classic baselines and state-of-the-art approaches for CT reconstruction.
First, we included the standard filtered backprojection (FBP) as well as plain MLEM applied to the shifted-Poisson likelihood.
We then compared with state-of-the-art fully supervised deep learning methods: an unrolled primal–dual architecture with convolutional modules called PD-Net \cite{adlerLearnedPrimalDualReconstruction2018}, and a Transformer-based model trained to recover full-dose references from FBP reconstructions called FBP-SwinIR \cite{Liu_2021_ICCV}.

\subsubsection{Metrics}

For quantitative evaluation we used the PSNR and SSIM metrics, as well as the Mean Absolute Error (MAE). Each reconstruction was clipped between $[-1000, 2000]$ HU before computing the metrics.

\subsubsection{Hyper-parameter tuning}

We early stopped the MLEM algorithm to 20 iterations to avoid artifacts. For our algorithm, we set the step-size $\tau=1$ and we optimized the noise level $\sigma$ as well as the regularization parameter $\lambda$ on 4 validation slices and tuned them against MAE. These different hyper-parameters are reported in Table \ref{annex:tab:ct-hyperparam} of Appendix \ref{annex:hyperparams}. In all CT experiments, we initialized the algorithm with a uniform input of 1, and we performed 600 iterations of the algorithm, as we do not use any ordered-subset strategy.
\begin{figure*}[t]
    \includegraphics[width=\linewidth]{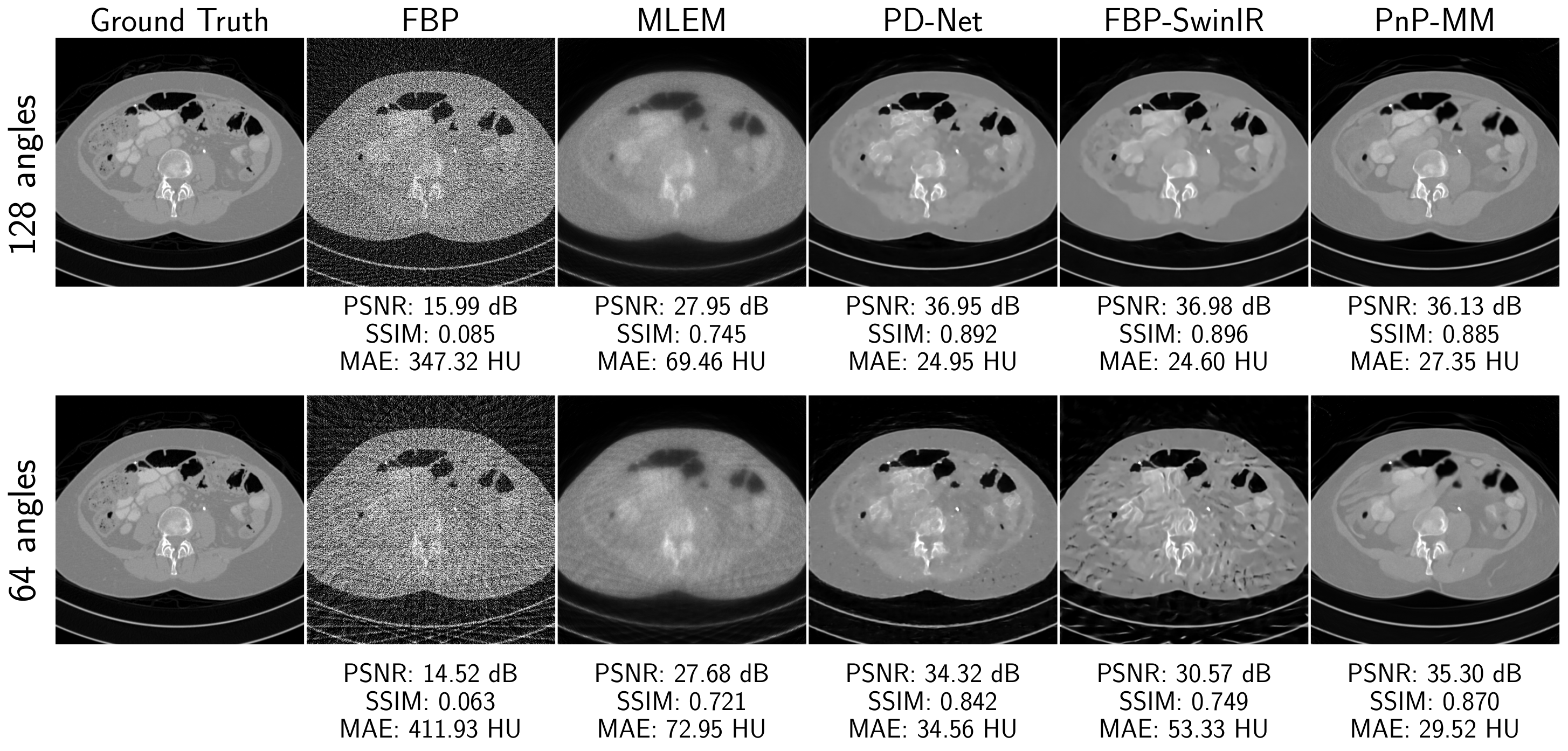} 
    \caption{Reconstruction of a test slice with the different methods. Top row: 128 views with the lowest Gaussian noise level, which was used to train the fully-supervised methods. Bottom row: 64 views with the lowest Gaussian noise level, which was never seen during the training of the fully supervised methods.}
    \label{fig:ct-figure2}
\end{figure*}

\subsubsection{Qualitative and quantitative results}

Qualitative results are shown in Figures~\ref{fig:ct-figure1} and ~\ref{fig:ct-figure2}, while quantitative results are reported in Table \ref{tab:ct-mayo}. The fully supervised FBP-SwinIR method achieves the best performance when evaluated under acquisition conditions close to its training distribution, notably for low noise levels ($\sigma=0.005$) and moderately sparse angular sampling (128 or 192 views). In these regimes, FBP-SwinIR consistently yields the highest PSNR and SSIM and the lowest MAE. Similar trends are observed for $\sigma=0.01$ and 192 angles, indicating that the method remains robust to moderate increases in noise. Adding more acquisition angles also seem to have little impact on the performances.

However, degradations are observed when reducing the number of projection angles to 64 as can be seen in Figure \ref{fig:ct-figure2}. In this sparse-view regime, the performance of FBP-SwinIR drops sharply across all noise levels. PD-Net exhibits a similar but less severe sensitivity to the reduction in the number of projections.

In contrast, while it does not achieve the best performances in the setting matching the training of FBP-SwinIR and PD-Net, the proposed PnP-MM method demonstrates strong robustness to variations in both the acquisition geometry and the noise level.

\section{Discussions}
\label{sec:discussions}

Although the proposed method reliably converges to satisfactory stationary points of the objective function, the convergence rate is relatively slow and typically requires several hundred iterations. This limitation could be alleviated by integrating established acceleration techniques such as Nesterov’s momentum  \cite{nesterovMethodSolvingConvex1983}, or by incorporating ordered-subsets schemes similar to those explored in our PET experiments.

For the experiments on natural images, we rely on a Gaussian denoiser which was trained on a dataset comprising several thousand samples. This rich prior enables the method to achieve state-of-the-art performance in Poisson deconvolution. In contrast, the PET and CT experiments are conducted on substantially smaller datasets, each containing only five patients. This reduces the expressive power of the learned prior, which leads to a choice of substantially smaller regularization weights $\lambda$, as reported in Appendix \ref{annex:hyperparams}. While these results illustrate that PnP methods remain competitive even in low-data regimes, practical use in medical imaging would clearly benefit from denoisers trained on larger and more diverse databases.

Our current implementation uses fixed hyperparameters throughout the iterations, which limits the reliance on heuristics but may not fully exploit the problem structure. Adaptive strategies, analogous to those employed in diffusion or accelerated PnP methods, could potentially improve performance. While our algorithm is reasonably robust to the choice of hyperparameters, tuning them remains time-consuming, and developing principled, automated selection mechanisms, for instance inspired by \cite{cascaranoConstrainedRegularizationDenoising2024a}, can make the method easier to apply.

Finally, all experiments in this work were performed on 2D slices. In medical imaging, fully 3D acquisitions are the standard, and extending our method to 3D is a necessary step. Beyond aligning with clinical practice, a 3D formulation would naturally improve reconstruction quality in settings where the measurement geometry is inherently volumetric, such as cone-beam CT or PET. A key advantage of PnP approaches is that this transition to 3D only modestly grows the training cost, especially compared with unrolled approaches.

\section{Conclusion}
\label{sec:conclusion}

In this paper, we introduced a novel algorithm for solving linear inverse problems corrupted by Poisson noise, and extended it to the Poisson–Gaussian setting through a shifted-Poisson approximation. The proposed method draws upon classical Majorization–Minimization principles and recent advances in provably convergent PnP schemes. We showed that convergent PnP algorithms can be constructed beyond the Gaussian noise model by exploiting suitable tangent majorants of the data-fidelity term, without modifying the prior. In particular, we specialized this approach to the Kullback–Leibler divergence by relying on its canonical EM majorant, which enables the integration of Gaussian denoisers while preserving convergence to a stationary point of an explicit objective, even under Poisson noise.
Our experiments demonstrate that, unlike fully supervised approaches, the proposed method exhibits strong robustness to variations of the forward operator and remains stable under severe Poisson and Poisson–Gaussian noise. This work opens several promising directions. Future research will focus on extending the approach to 3D data which are common in medical imaging.

\bibliography{bib/biblio}{}
\bibliographystyle{bib/ieeetr_custom}

\newpage

\section{Appendix}

\subsection{Derivation of the Richardson-Lucy / MLEM updates}
\label{sec:proof:lemma-mu}

Let us first revisit the key lemma that shows the MM framework's ability to guarantee a decrease in the objective function.

\begin{lemma}
\label{lem:mm}
If $F$ is a tangent majorant of $f$, then the sequence $(f(x^{(n)}))$ is non-increasing under the updates:
\begin{equation}
    x^{(n+1)} = \argmin_x F(x, x^{(n)}).
    \label{eq:annex2}
\end{equation}
\end{lemma}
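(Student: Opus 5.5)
The plan is to chain together the two defining properties of a tangent majorant from Definition~\ref{def:surrogate} with the minimality of $x^{(n+1)}$. We fix $n$ and bound $f(x^{(n+1)})$ by evaluating the surrogate at suitable points.

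\textbf{Step 1 (majorization).} Apply the inequality $f(x) \leq F(x,\tilde{x})$ with $x = x^{(n+1)}$ and $\tilde{x} = x^{(n)}$. This gives $f(x^{(n+1)}) \leq F(x^{(n+1)}, x^{(n)})$.

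\textbf{Step 2 (minimization).} Since $x^{(n+1)}$ is a minimizer of $F(\cdot, x^{(n)})$, its value there is no larger than the value at any other point. Taking the competitor $x = x^{(n)}$ yields $F(x^{(n+1)}, x^{(n)}) \leq F(x^{(n)}, x^{(n)})$.

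\textbf{Step 3 (tangency).} The second property of Definition~\ref{def:surrogate} gives $F(x^{(n)}, x^{(n)}) = f(x^{(n)})$.

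Combining the three steps produces the chain
\begin{equation*}
f(x^{(n+1)}) \leq F(x^{(n+1)}, x^{(n)}) \leq F(x^{(n)}, x^{(n)}) = f(x^{(n)}),
\end{equation*}
which holds for every $n$ and is exactly the claimed monotonicity.

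The only delicate point is that the update (\ref{eq:annex2}) must be well defined, i.e.\ the argmin must exist. I would simply take this as part of the hypothesis, since the lemma presupposes that the iterates are generated. In fact the argument never uses exact minimality: it only needs $F(x^{(n+1)},x^{(n)}) \leq F(x^{(n)},x^{(n)})$. So the same proof covers any choice from the argmin set, and even inexact steps that merely decrease the surrogate.

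For the Poisson case there is one further caveat: $f$ and $F_{\text{EM}}$ may take the value $+\infty$ on the boundary of the positive orthant. The chain of inequalities is still valid in $\mathbb{R}\cup\{+\infty\}$. It becomes informative as soon as $f(x^{(0)})$ is finite, which is the case for the usual initialization $x^{(0)} = \mathbf{1}$.
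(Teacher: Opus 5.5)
Your proof is correct and is the paper's argument: the chain $f(x^{(n+1)}) \leq F(x^{(n+1)}, x^{(n)}) \leq F(x^{(n)}, x^{(n)}) = f(x^{(n)})$, obtained from majorization, minimality of $x^{(n+1)}$, and tangency. Your extra remarks are accurate additions the paper does not spell out: only the surrogate descent $F(x^{(n+1)},x^{(n)}) \leq F(x^{(n)},x^{(n)})$ is needed, and the chain stays valid with values in $\mathbb{R}\cup\{+\infty\}$.
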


\begin{proof} From definition \ref{def:surrogate} and (\ref{eq:annex2}) we obtain:
    \begin{equation*}
        f(x^{(n+1)}) \leq F(x^{(n+1)}, x^{(n)}) \leq F(x^{(n)}, x^{(n)}) = f(x^{(n)}).
    \end{equation*}
\end{proof}

If $f$ is the Poisson NLL as defined in (\ref{eq:poisson-likelihood}), then replacing $x$ by $\tilde{x}$ in the expression of the tangent majorant $F_{\text{EM}}$ (\ref{eq:surrogate}) directly gives $F_{\text{EM}}(\tilde{x},\tilde{x}) = f(\tilde{x})$. 
The proof of the majoration property $\forall \tilde{x}, \in \mathbb{R}^{d}_{+}$, $f(x) \leq F_{\text{EM}}(x, \tilde{x})$ relies on a Jensen convexity inequality. Indeed one can see that since $\sum_j \widetilde\alpha_{ij} = 1$, 
\begin{equation*}
    - \sum_j y_i \tilde{\alpha}_{ij} \log \left( \frac{A_{ij} x_j}{\tilde{\alpha}_{ij} } \right) \geq - y_i \log \left(\sum_j A_{ij} x_j \right),
\end{equation*}
and then:
\begin{equation*}
    F_{\text{EM}}(x, \tilde{x}) \geq \sum_i \left((Ax)_i -  y_i \log \left( (Ax)_i \right)\right) = f(x).
\end{equation*}
Now the minimization of $F$ with respect to its first argument is also straightforward. The partial derivatives of $F$ are:
\begin{equation*}
    \frac{\partial F_{\text{EM}}(x, \Tilde{x})}{\partial x_j} = \sum_i \left[ A_{ij} - \frac{\Tilde{\alpha}_{ij} y_i}{x_j} \right]
    =s_j - \frac{\Tilde{x}_j}{x_j}\sum_i \frac{A_{ij}y_i}{(A\Tilde{x})_i}.
\end{equation*}
The Karush-Kuhn-Tucker (KKT) condition $x_j\frac{\partial F_{\text{EM}}(x, \Tilde{x})}{\partial x_j} = 0$ then gives:
\begin{equation*}
    x_j = \frac{\Tilde{x}_j}{s_j} \sum_i \frac{y_i A_{ij}}{(A\Tilde{x})_i},
\end{equation*}
meaning we have the expected result: 
\begin{equation*}
    x^{*} = \frac{\tilde{x}}{s} \cdot A^T \frac{y}{A \tilde{x}}.
\end{equation*}

\subsection{Proof of Theorem \ref{th:convergence-pnp}}
\label{sec:proof:pnp-convergence}

    1) Our analysis builds upon well-established convergence results for the forward-backward splitting method, as described in \cite{ beckFirstOrderMethodsOptimization2017}, along with the basic properties of tangent majorants. We will use the MM framework to construct a tangent majorant of $h(x) = f(x) + g(x)$. Adding the regularization parameter $\lambda$ is straightforward by replacing $g$ by $\lambda g$ and the Lipschitz constant $L$ by $\lambda L$, and is not detailed here for ease of notation.
    We define $H$ the tangent majorant of $h$: 
    \begin{equation*}
        H (x, \tilde{x}) = F(x, \tilde{x}) +  g(\tilde{x}) + \langle x - \tilde{x}, \nabla g (\tilde{x}) \rangle + \frac{1}{2 \tau} \| x - \tilde{x}\|^2,
    \end{equation*}
    where $\tau > 0$.
    Given that $F(x, x) = f(x)$, showing that $H (x, x) = h(x)$ is trivial. We will now show that for all $x, \tilde{x} \in \mathbb{R}^d, h(x) \leq H(x, \tilde{x})$. 
    Since $g$ is $L$-smooth, the descent lemma \cite{beckFirstOrderMethodsOptimization2017} gives for any $\tau \leq \frac{1}{L}$, and any $x, \tilde{x} \in \mathbb{R}^d$:
    \begin{equation}
        g(x) \leq g(\tilde{x}) + \langle x - \tilde{x}, \nabla g (\tilde{x}) \rangle + \frac{1}{2 \tau} \| x - \tilde{x}\|^2.
        \label{eq:descent-lemma}
    \end{equation}
    Now since $F$ is a tangent majorant of $f$ we also have for all $x, \tilde{x} \in \mathbb{R}^d$ that:
    \begin{equation}
        f(x) \leq F(x, \tilde{x}).
        \label{eq:annex1}
    \end{equation}
    By adding equations (\ref{eq:descent-lemma}) and (\ref{eq:annex1}) we obtain: 
   \begin{equation*}
       h(x) \leq F(x, \tilde{x}) +  g(\tilde{x}) + \langle x - \tilde{x}, \nabla g (\tilde{x}) \rangle + \frac{1}{2 \tau} \| x - \tilde{x}\|^2
       \leq H(x, \tilde{x}).
   \end{equation*}
       
   The minimum of $H(\cdot, \tilde{x})$ can be expressed in terms of a proximal operator:
   \begin{align*}
        & \argmin_x \left\{ F(x, \tilde{x}) +  g(\tilde{x}) + \langle x - \tilde{x}, \nabla g (\tilde{x}) \rangle + \frac{1}{2 \tau} \| x - \tilde{x}\|^2 \right\}
     \\ & =\argmin_x \left\{ F(x, \tilde{x}) + \frac{1}{2 \tau} \| x - (\tilde{x} - \tau \nabla g(\tilde{x}))\|^2 \right\}
     \\ & =\text{prox}_{\tau F(., \tilde{x})} \circ (\text{Id} - \tau \nabla g) (\tilde{x}).
   \end{align*}
    And so by setting $x^{(n+1)} = \text{prox}_{\tau F(., x^{(n)})} \circ (\text{Id} - \tau \nabla g) (x^{(n)})$ and using Lemma \ref{lem:mm} we obtain that the iterates $\left( h(x^{{(n)}}) \right)$ are non-increasing. Since $h$ is lower-bounded,  $\left( h(x^{{(n)}}) \right)$ converges to a limit $h^*$.
     
    2) Given that $H(x^{(n+1)}, x^{(n)}) \leq H(x^{(n)}, x^{(n)})$, by definition of $x^{(n+1)}$ we have:
    \begin{align*}
        F(x^{(n+1)}, x^{{(n)}}) &\leq f(x^{{(n)}}) - \langle x^{(n+1)} - x^{{(n)}}, \nabla g ( x^{(n+1)}) \rangle 
        \\ &~~ - \frac{1}{2 \tau} \| x^{(n+1)} - x^{{(n)}} \|^2.
    \end{align*}
    Since $F$ is a tangent majorant of $f$ we also have $f(x^{(n+1)}) \leq F(x^{(n+1)}, x^{{(n)}})$, thus:
    \begin{align*}
        f(x^{(n+1)}) & \leq f(x^{{(n)}}) - \langle x^{(n+1)} - x^{{(n)}}, \nabla g ( x^{{(n)}}) \rangle \\\ & ~~ - \frac{1}{2 \tau} \| x^{(n+1)} - x^{{(n)}} \|^2.
    \end{align*}
    We can now simply follow the usual proof of the convergence rate of the proximal gradient descent, using the descent lemma (\ref{eq:descent-lemma}) with a step size $\frac{1}{L}$.
    \begin{align*}
        h(x^{n+1}) &= f(x^{(n+1)}) + g(x^{(n+1)})
                \\ &\leq f(x^{{(n)}}) - \langle x^{(n+1)} - x^{{(n)}}, \nabla g ( x^{{(n)}}) \rangle - \frac{1}{2 \tau} \| x^{(n+1)} - x^{{(n)}} \|^2 
                \\ &+ g(x^{{(n)}}) + \langle x^{(n+1)} - x^{{(n)}}, \nabla g (x^{{(n)}}) \rangle + \frac{L}{2} \| x^{(n+1)} - x^{{(n)}}\|^2
                \\ &\leq h(x^{{(n)}}) - \left(\frac{1}{2\tau} - \frac{L}{2}\right)  \| x^{(n+1)} - x^{{(n)}}\|^2.
    \end{align*}
    By slightly rearranging the terms we end up with:
    \begin{equation*}
        \| x^{(n+1)} - x^{{(n)}}\|^2 \leq \frac{1}{\frac{1}{2\tau} - \frac{L}{2}} \left( h(x^{{(n)}}) - h(x^{(n+1)}) \right).
    \end{equation*}
    Now if we sum over all the iterates up until iterate $N$ we have:
    \begin{align*}
        \sum_{n}^{N} \| x^{(n+1)} - x^{{(n)}} \|^2 &\leq \frac{1}{\frac{1}{2\tau} - \frac{L}{2}} \left( h(x^{(0)}) - h(x^{(N)}) \right)  
        \\ &\leq \frac{1}{\frac{1}{2\tau} - \frac{L}{2}} \left( h(x^{(0)}) - h^{*}\right)
    \end{align*}
    and so if we define $\gamma_N = \min_{0 \leq n \leq N} \|x^{(n+1)} - x^{{(n)}}\|^2$ we have:
    \begin{equation*}
        \gamma_N \leq \frac{1}{N} \frac{1}{\frac{1}{2\tau} - \frac{L}{2}} \left( h(x^{(0)}) - h^{*}\right)
    \end{equation*}
    and thus $\lim_{N \to \infty} \gamma_N = 0$ with rate $\mathcal{O}(1/N)$.
    
    3) Since for all $\tilde{x} \in \mathbb{R}^d$, $F(\cdot, \tilde{x})$ is convex with respect to its first argument, the proof used in the standard proximal gradient descent remains valid. In particular, one can directly apply the proof provided in \cite{huraultGradientStepDenoiser2021}[Theorem 1 iii)] to establish this result.
    
\subsection{Computation of $\mathrm{prox}_{\tau F_{\text{EM}}}$}
\label{annex:prox-computation}
The proximal operator of $F_{\text{EM}}$ with respect to its first argument is defined as:
    \begin{equation*}
        \text{prox}_{\tau F_{\text{EM}}(.,\tilde{x})} (u) = \argmin_{x\ge 0} F_{EM}(x, \tilde{x}) + \frac{1}{2 \tau}\| x - u\|^2 = \argmin_{x\ge 0} Q(x, u).
    \end{equation*}
Again the proof is straightforward by computing the gradient of this expression: 
\begin{equation}
     \frac{\partial Q(x,u) }{\partial x_k} = \left( s_k - \sum_i \frac{y_i \tilde{\alpha}_{ik}}{x_k} \right) + \frac{x_k - u_k}{\tau}
\end{equation}
then solving the quadratic equation given by the KKT conditions $x_k\frac{\partial Q(x, u)}{\partial x_k} = 0$:
\begin{equation}
    x_k^2 + \tau \left(s_k - \frac{u_k}{\tau} \right) x_k - \tau \sum_i y_i \tilde{\alpha}_{ik} = 0
\end{equation}
This equation has one solution under the constraint $x_k \geq 0$:
\begin{equation*}
    x_k = \frac{1}{2} \left[ u_k - \tau s_k + \sqrt{\left( u_k - \tau s_k \right)^2 + 4 \tau \sum_i y_i \tilde{\alpha}_{ik}} \right]
\end{equation*}

\subsection{Additional Experiments}
\label{annex:additional-exp}

In this section we present additional quantitative results for our different experiments. Namely, in Table \ref{annex:tab:deblur-cbsd68-quantitative} we report deblurring results on the CBSD68 validation dataset. In Table \ref{annex:tab:pet-denoising} we report the denoising performances of the GS-DRUNet network on the Brainweb dataset. 

\onecolumn

\begin{table*}[!h]
\caption{Comparison on the CBSD68 validation dataset}
\label{annex:tab:deblur-cbsd68-quantitative}
\resizebox{\textwidth}{!}{%
\begin{tabular}{lcccccccccccccccc}
\multicolumn{1}{c}{}          &                                                                             & \multicolumn{3}{c}{\includegraphics[width=0.1\linewidth]{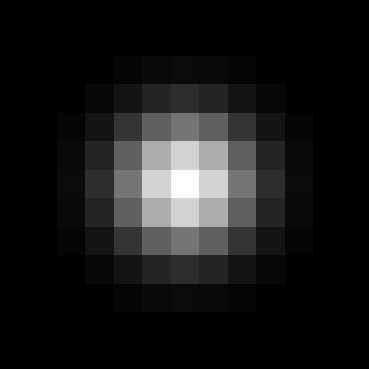}} & \multicolumn{3}{c}{\includegraphics[width=0.1\linewidth]{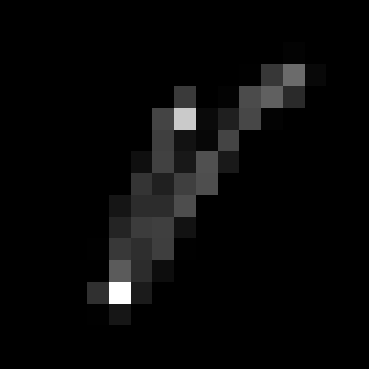}} & \multicolumn{3}{c}{\includegraphics[width=0.1\linewidth]{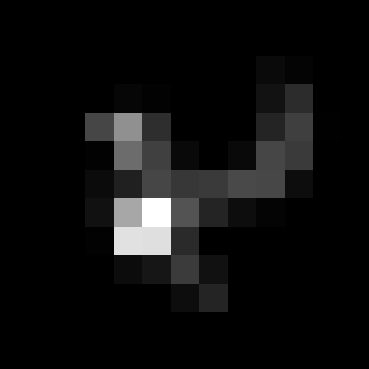}} & \multicolumn{3}{c}{\includegraphics[width=0.1\linewidth]{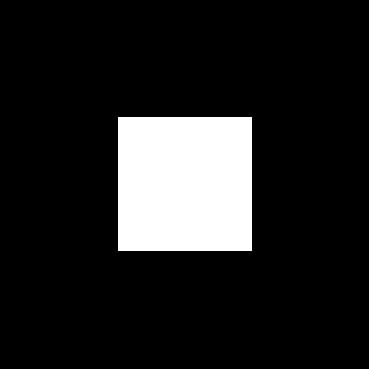}} & \multicolumn{3}{c}{Average}                            \\ \cline{1-14}
\multicolumn{1}{|c|}{$\zeta$} & \multicolumn{1}{c|}{Methods}                                                & PSNR $\uparrow$      & SSIM $\uparrow$     & \multicolumn{1}{c|}{LPIPS $\downarrow$}     & PSNR $\uparrow$      & SSIM $\uparrow$      & \multicolumn{1}{c|}{LPIPS $\downarrow$}     & PSNR $\uparrow$      & SSIM $\uparrow$      & \multicolumn{1}{c|}{LPIPS $\downarrow$}     & PSNR $\uparrow$     & SSIM $\uparrow$     & \multicolumn{1}{c|}{LPIPS $\downarrow$}     & PSNR $\uparrow$ & SSIM $\uparrow$ & LPIPS $\downarrow$ \\ \cline{1-14}
\multicolumn{1}{|l|}{60}      & \multicolumn{1}{c|}{RL-TV \cite{maximTomographicReconstructionPoisson2023}} & 24.02                & 0.608               & \multicolumn{1}{c|}{0.424}                  & 22.59                & 0.516                & \multicolumn{1}{c|}{0.485}                  & 23.35                & 0.553                & \multicolumn{1}{c|}{0.429}                  & 22.93               & 0.549               & \multicolumn{1}{c|}{{\ul 0.488}}            & 23.22           & 0.556           & 0.457              \\
\multicolumn{1}{|l|}{}        & \multicolumn{1}{c|}{DPS \cite{chungDiffusionPosteriorSampling2022}}         & 18.78                & 0.436               & \multicolumn{1}{c|}{0.574}                  & 21.17                & 0.519                & \multicolumn{1}{c|}{0.392}                  & 21.94                & 0.567                & \multicolumn{1}{c|}{0.391}                  & 18.24               & 0.374               & \multicolumn{1}{c|}{0.559}                  & 20.03           & 0.474           & 0.479              \\
\multicolumn{1}{|l|}{}        & \multicolumn{1}{c|}{Prox-DiffPIR \cite{melidonis2025scorebased}}            & 24.98                & 0.682               & \multicolumn{1}{c|}{\textbf{0.374}}         & 24.10                & 0.615                & \multicolumn{1}{c|}{\textbf{0.326}}         & 25.18                & 0.675                & \multicolumn{1}{c|}{\textbf{0.275}}         & {\ul 23.73}         & {\ul 0.603}         & \multicolumn{1}{c|}{\textbf{0.449}}         & 24.50           & 0.644           & \textbf{0.356}     \\
\multicolumn{1}{|l|}{}        & \multicolumn{1}{c|}{PhD-Net \cite{sanghviPhotonLimitedNonBlind2022}}        & 24.98                & 0.670               & \multicolumn{1}{c|}{0.465}                  & 23.52                & 0.588                & \multicolumn{1}{c|}{0.515}                  & 24.75                & 0.647                & \multicolumn{1}{c|}{0.472}                  & 23.65               & 0.595               & \multicolumn{1}{c|}{0.559}                  & 24.23           & 0.625           & 0.503              \\
\multicolumn{1}{|l|}{}        & \multicolumn{1}{c|}{DPIR \cite{zhangPlugPlayImageRestoration2022}}          & {\ul 25.39}          & {\ul 0.690}         & \multicolumn{1}{c|}{0.442}                  & {\ul 24.53}          & 0.635                & \multicolumn{1}{c|}{0.462}                  & 25.56                & 0.686                & \multicolumn{1}{c|}{0.394}                  & 23.68               & 0.595               & \multicolumn{1}{c|}{0.565}                  & 24.79           & 0.652           & 0.466              \\
\multicolumn{1}{|l|}{}        & \multicolumn{1}{c|}{B-PnP \cite{huraultConvergentBregmanPlugandplay2024a}}  & 25.26                & 0.686               & \multicolumn{1}{c|}{0.463}                  & 24.42                & {\ul 0.638}          & \multicolumn{1}{c|}{0.463}                  & {\ul 25.90}          & {\ul 0.707}          & \multicolumn{1}{c|}{0.398}                  & 23.70               & 0.599               & \multicolumn{1}{c|}{0.562}                  & {\ul 24.82}     & {\ul 0.657}     & 0.472              \\
\multicolumn{1}{|l|}{}        & \multicolumn{1}{c|}{\textbf{PnP-MM (Ours)}}                                 & \textbf{25.63}       & \textbf{0.703}      & \multicolumn{1}{c|}{{\ul 0.409}}            & \textbf{25.12}       & \textbf{0.672}       & \multicolumn{1}{c|}{{\ul 0.388}}            & \textbf{26.47}       & \textbf{0.738}       & \multicolumn{1}{c|}{{\ul 0.311}}            & \textbf{24.13}      & \textbf{0.617}      & \multicolumn{1}{c|}{0.516}                  & \textbf{25.34}  & \textbf{0.683}  & {\ul 0.406}        \\ \cline{1-14}
\multicolumn{1}{|l|}{40}      & \multicolumn{1}{c|}{RL-TV \cite{maximTomographicReconstructionPoisson2023}} & 23.81                & 0.599               & \multicolumn{1}{c|}{0.441}                  & 22.67                & 0.530                & \multicolumn{1}{c|}{0.483}                  & 23.51                & 0.571                & \multicolumn{1}{c|}{0.442}                  & 22.70               & 0.539               & \multicolumn{1}{c|}{{\ul 0.506}}            & 23.17           & 0.560           & 0.468              \\
\multicolumn{1}{|l|}{}        & \multicolumn{1}{c|}{DPS \cite{chungDiffusionPosteriorSampling2022}}         & 18.82                & 0.431               & \multicolumn{1}{c|}{0.586}                  & 20.73                & 0.507                & \multicolumn{1}{c|}{0.394}                  & 21.50                & 0.549                & \multicolumn{1}{c|}{0.404}                  & 18.09               & 0.381               & \multicolumn{1}{c|}{0.578}                  & 19.79           & 0.467           & 0.491              \\
\multicolumn{1}{|l|}{}        & \multicolumn{1}{c|}{Prox-DiffPIR \cite{melidonis2025scorebased}}            & 24.63                & 0.663               & \multicolumn{1}{c|}{\textbf{0.390}}         & 23.73                & 0.599                & \multicolumn{1}{c|}{\textbf{0.370}}         & 24.73                & 0.652                & \multicolumn{1}{c|}{\textbf{0.312}}         & 23.42               & 0.587               & \multicolumn{1}{c|}{\textbf{0.471}}         & 24.13           & 0.625           & \textbf{0.386}     \\
\multicolumn{1}{|l|}{}        & \multicolumn{1}{c|}{PhD-Net \cite{sanghviPhotonLimitedNonBlind2022}}        & 24.66                & 0.655               & \multicolumn{1}{c|}{0.486}                  & 23.23                & 0.573                & \multicolumn{1}{c|}{0.543}                  & 24.35                & 0.626                & \multicolumn{1}{c|}{0.543}                  & 23.41               & 0.584               & \multicolumn{1}{c|}{0.573}                  & 23.91           & 0.609           & 0.524              \\
\multicolumn{1}{|l|}{}        & \multicolumn{1}{c|}{DPIR \cite{zhangPlugPlayImageRestoration2022}}          & {\ul 25.09}          & 0.674               & \multicolumn{1}{c|}{0.458}                  & 24.04                & 0.612                & \multicolumn{1}{c|}{0.491}                  & 24.84                & 0.660                & \multicolumn{1}{c|}{0.423}                  & 23.42               & 0.583               & \multicolumn{1}{c|}{0.582}                  & 24.35           & 0.632           & 0.489              \\
\multicolumn{1}{|l|}{}        & \multicolumn{1}{c|}{B-PnP \cite{huraultConvergentBregmanPlugandplay2024a}}  & 24.97                & {\ul 0.678}         & \multicolumn{1}{c|}{0.484}                  & {\ul 24.18}          & {\ul 0.631}          & \multicolumn{1}{c|}{0.462}                  & {\ul 25.50}          & {\ul 0.697}          & \multicolumn{1}{c|}{0.410}                  & {\ul 23.56}         & {\ul 0.595}         & \multicolumn{1}{c|}{0.571}                  & {\ul 24.55}     & {\ul 0.650}     & 0.481              \\
\multicolumn{1}{|l|}{}        & \multicolumn{1}{c|}{\textbf{PnP-MM (Ours)}}                                 & \textbf{25.37}       & \textbf{0.691}      & \multicolumn{1}{c|}{{\ul 0.414}}            & \textbf{24.71}       & \textbf{0.650}       & \multicolumn{1}{c|}{{\ul 0.388}}            & \textbf{25.99}       & \textbf{0.715}       & \multicolumn{1}{c|}{{\ul 0.319}}            & \textbf{23.92}      & \textbf{0.608}      & \multicolumn{1}{c|}{0.516}                  & \textbf{25.00}  & \textbf{0.666}  & {\ul 0.411}        \\ \cline{1-14}
\multicolumn{1}{|l|}{20}      & \multicolumn{1}{c|}{RL-TV \cite{maximTomographicReconstructionPoisson2023}} & 23.29                & 0.575               & \multicolumn{1}{c|}{0.477}                  & 22.29                & 0.521                & \multicolumn{1}{c|}{0.508}                  & 23.20                & 0.565                & \multicolumn{1}{c|}{0.469}                  & 22.24               & 0.518               & \multicolumn{1}{c|}{{\ul 0.538}}            & 22.76           & 0.545           & 0.498              \\
\multicolumn{1}{|l|}{}        & \multicolumn{1}{c|}{DPS \cite{chungDiffusionPosteriorSampling2022}}         & 17.58                & 0.382               & \multicolumn{1}{c|}{0.631}                  & 20.49                & 0.494                & \multicolumn{1}{c|}{\textbf{0.398}}         & 21.60                & 0.541                & \multicolumn{1}{c|}{0.395}                  & 17.78               & 0.349               & \multicolumn{1}{c|}{0.575}                  & 19.36           & 0.442           & 0.500              \\
\multicolumn{1}{|l|}{}        & \multicolumn{1}{c|}{Prox-DiffPIR \cite{melidonis2025scorebased}}            & 24.00                & 0.629               & \multicolumn{1}{c|}{\textbf{0.423}}         & 23.06                & 0.567                & \multicolumn{1}{c|}{{\ul 0.441}}            & 23.98                & 0.616                & \multicolumn{1}{c|}{\textbf{0.378}}         & 22.84               & 0.555               & \multicolumn{1}{c|}{\textbf{0.518}}         & 23.47           & 0.592           & \textbf{0.440}     \\
\multicolumn{1}{|l|}{}        & \multicolumn{1}{c|}{PhD-Net \cite{sanghviPhotonLimitedNonBlind2022}}        & 24.06                & 0.627               & \multicolumn{1}{c|}{0.518}                  & 22.81                & 0.554                & \multicolumn{1}{c|}{0.577}                  & 23.76                & 0.599                & \multicolumn{1}{c|}{0.533}                  & 22.94               & 0.563               & \multicolumn{1}{c|}{0.597}                  & 23.39           & 0.585           & 0.556              \\
\multicolumn{1}{|l|}{}        & \multicolumn{1}{c|}{DPIR \cite{zhangPlugPlayImageRestoration2022}}          & {\ul 24.42}          & {\ul 0.637}         & \multicolumn{1}{c|}{0.498}                  & {\ul 23.36}          & 0.581                & \multicolumn{1}{c|}{0.553}                  & {\ul 24.60}          & {\ul 0.638}          & \multicolumn{1}{c|}{0.472}                  & 22.92               & 0.558               & \multicolumn{1}{c|}{0.617}                  & {\ul 23.82}     & {\ul 0.603}     & 0.535              \\
\multicolumn{1}{|l|}{}        & \multicolumn{1}{c|}{B-PnP \cite{huraultConvergentBregmanPlugandplay2024a}}  & 23.43                & 0.598               & \multicolumn{1}{c|}{0.627}                  & 23.12                & {\ul 0.588}          & \multicolumn{1}{c|}{0.506}                  & 23.09                & 0.583                & \multicolumn{1}{c|}{0.570}                  & {\ul 22.97}         & {\ul 0.572}         & \multicolumn{1}{c|}{0.627}                  & 23.15           & 0.585           & 0.582              \\
\multicolumn{1}{|l|}{}        & \multicolumn{1}{c|}{\textbf{PnP-MM (Ours)}}                                 & \textbf{24.78}       & \textbf{0.663}      & \multicolumn{1}{c|}{{\ul 0.448}}            & \textbf{23.94}       & \textbf{0.614}       & \multicolumn{1}{c|}{0.459}                  & \textbf{25.11}       & \textbf{0.676}       & \multicolumn{1}{c|}{{\ul 0.391}}            & \textbf{23.43}      & \textbf{0.585}      & \multicolumn{1}{c|}{0.557}                  & \textbf{24.32}  & \textbf{0.635}  & \textbf{0.464}     \\ \cline{1-14}
\multicolumn{1}{|l|}{5}       & \multicolumn{1}{c|}{RL-TV \cite{maximTomographicReconstructionPoisson2023}} & 21.45                & 0.469               & \multicolumn{1}{c|}{0.586}                  & 20.81                & 0.432                & \multicolumn{1}{c|}{0.585}                  & 20.86                & 0.428                & \multicolumn{1}{c|}{0.602}                  & 21.21               & 0.469               & \multicolumn{1}{c|}{{\ul 0.577}}            & 21.08           & 0.449           & 0.596              \\
\multicolumn{1}{|l|}{}        & \multicolumn{1}{c|}{DPS \cite{chungDiffusionPosteriorSampling2022}}         & 19.11                & 0.444               & \multicolumn{1}{c|}{\textbf{0.495}}         & 17.89                & 0.375                & \multicolumn{1}{c|}{\textbf{0.525}}         & 18.39                & 0.414                & \multicolumn{1}{c|}{0.520}                  & 17.80               & 0.383               & \multicolumn{1}{c|}{\textbf{0.516}}         & 18.29           & 0.404           & \textbf{0.514}     \\
\multicolumn{1}{|l|}{}        & \multicolumn{1}{c|}{Prox-DiffPIR \cite{melidonis2025scorebased}}            & 22.65                & 0.556               & \multicolumn{1}{c|}{{\ul 0.503}}            & {\ul 21.85}          & 0.509                & \multicolumn{1}{c|}{{\ul 0.538}}            & {\ul 22.54}          & 0.546                & \multicolumn{1}{c|}{{\ul 0.497}}            & 21.85               & 0.505               & \multicolumn{1}{c|}{0.595}                  & {\ul 22.22}     & 0.529           & {\ul 0.533}        \\
\multicolumn{1}{|l|}{}        & \multicolumn{1}{c|}{PhD-Net \cite{sanghviPhotonLimitedNonBlind2022}}        & {\ul 22.71}          & 0.566               & \multicolumn{1}{c|}{0.579}                  & 21.73                & 0.508                & \multicolumn{1}{c|}{0.631}                  & 22.40                & 0.539                & \multicolumn{1}{c|}{0.596}                  & {\ul 21.88}         & {\ul 0.521}         & \multicolumn{1}{c|}{0.647}                  & 22.18           & 0.534           & 0.647              \\
\multicolumn{1}{|l|}{}        & \multicolumn{1}{c|}{DPIR \cite{zhangPlugPlayImageRestoration2022}}          & 17.34                & 0.355               & \multicolumn{1}{c|}{0.760}                  & 12.02                & 0.272                & \multicolumn{1}{c|}{0.756}                  & 10.68                & 0.230                & \multicolumn{1}{c|}{0.784}                  & 19.70               & 0.408               & \multicolumn{1}{c|}{0.787}                  & 14.93           & 0.316           & 0.772              \\
\multicolumn{1}{|l|}{}        & \multicolumn{1}{c|}{B-PnP \cite{huraultConvergentBregmanPlugandplay2024a}}  & 21.33                & {\ul 0.572}         & \multicolumn{1}{c|}{0.580}                  & 20.64                & {\ul 0.521}          & \multicolumn{1}{c|}{0.613}                  & 21.31                & {\ul 0.568}          & \multicolumn{1}{c|}{0.566}                  & 20.62               & 0.515               & \multicolumn{1}{c|}{0.655}                  & 20.97           & {\ul 0.544}     & 0.603              \\
\multicolumn{1}{|l|}{}        & \multicolumn{1}{c|}{\textbf{PnP-MM (Ours)}}                                 & \textbf{23.50}       & \textbf{0.601}      & \multicolumn{1}{c|}{0.513}                  & \textbf{22.56}       & \textbf{0.550}       & \multicolumn{1}{c|}{0.550}                  & \textbf{23.48}       & \textbf{0.598}       & \multicolumn{1}{c|}{\textbf{0.493}}         & \textbf{22.49}      & \textbf{0.541}      & \multicolumn{1}{c|}{0.607}                  & \textbf{23.01}  & \textbf{0.573}  & 0.541              \\ \cline{1-14}
\end{tabular}
}
\end{table*}

\begin{table}[h]
\caption{Quantitative denoising performances on the Brainweb test dataset}
\label{annex:tab:pet-denoising}
\centering
\begin{tabular}{|l|ccccccc|}
    \hline
    \textbf{Noise level ($\sigma$)} & 1                & 5                & 10               & 20              & 30              & 40              & 50              \\ \hline
    \textbf{NRMSE (\%)}             & $2.18 \pm 0.302$ & $3.68 \pm 0.414$ & $6.08 \pm 0.691$ & $10.4 \pm 1.52$ & $13.8 \pm 1.96$ & $16.2 \pm 2.27$ & $18.6 \pm 2.94$ \\ \hline
\end{tabular}
\end{table}

\twocolumn

\subsection{Hyper-parameters}
\label{annex:hyperparams}

In this section, we detail the hyperparameters used in the experiments of Section~\ref{sec:exp}. Specifically, Table \ref{annex:tab:deblur-hyperparam} reports the settings for the deconvolution experiment, Table \ref{annex:tab:pet-hyperparam} for the PET experiment, and Table \ref{annex:tab:ct-hyperparam} for the CT experiment. For the deblurring experiment, for B-PnP \cite{huraultConvergentBregmanPlugandplay2024a} and Prox-DiffPIR \cite{melidonis2025scorebased} we have re-used the hyper-parameters selected in their experiments. Finally, for DPS \cite{chungDiffusionPosteriorSampling2022}, we used the hyper-parameters chosen by Melidionis et \textit{al.} in their comparison in \cite{melidonis2025scorebased}.

\begin{table*}[!t]
\centering
\label{annex:tab:hyperparam}

\begin{minipage}[t]{0.48\linewidth}
\centering
\caption{Deblurring experiments.}
\label{annex:tab:deblur-hyperparam}
\resizebox{\linewidth}{!}{%
\begin{tabular}{|l|l|c|c|}
\hline
Noise level $\zeta$ & RL-TV          & PnP-MM              & DPIR                \\ \hline
\multirow{3}{*}{60} & $\lambda=0.03$ & $\lambda=0.86$      & -                   \\
                    & -              & $\sigma = 16 / 255$ & $\sigma = 25 / 255$ \\
                    & -              & $\tau=0.3$          & -                   \\ \hline
\multirow{3}{*}{40} & $\lambda=0.05$ & $\lambda=0.80$      & -                   \\
                    & -              & $\sigma = 22 / 255$ & $\sigma = 30 / 255$ \\
                    & -              & $\tau=0.16$         & -                   \\ \hline
\multirow{3}{*}{20} & $\lambda=0.12$ & $\lambda=0.90$      & -                   \\
                    & -              & $\sigma = 28 / 255$ & $\sigma = 46 / 255$ \\
                    & -              & $\tau=0.26$         & -                   \\ \hline
\multirow{3}{*}{5}  & $\lambda=0.14$ & $\lambda=0.95$      & -                   \\
                    & -              & $\sigma = 56 / 255$ & $\sigma = 50 / 255$ \\
                    & -              & $\tau=0.33$         & -                   \\ \hline
\end{tabular}}
\end{minipage}
\hfill
\begin{minipage}[t]{0.48\linewidth}
\centering
\caption{PET reconstruction experiment.}
\label{annex:tab:pet-hyperparam}
\resizebox{\linewidth}{!}{%
\begin{tabular}{|l|ccccc|}
\hline
Method           & $\sigma$ & $\lambda$ & $\tau$ & Subsets & Iterations \\ \hline
PnP-MM           & 5        & 0.3       & 100    & 14      & 50        \\ \hline
OSEM             & --       & --        & --     & 14      & 10        \\ \hline
mr-MAP-EM        & --       & 0.06      & --     & 14      & 10        \\ \hline
FBSEM / mr-FBSEM & --       & --        & --     & 6       & 10        \\ \hline
\end{tabular}}
\end{minipage}

\end{table*}

\begin{table*}[h]
\centering
\caption{Hyperparameters for the CT reconstruction experiment.}
\label{annex:tab:ct-hyperparam}
\resizebox{0.8\linewidth}{!}{%
\begin{tabular}{|l|l|ccc|ccc|ccc|}
\hline
                        &                     & \multicolumn{3}{c|}{192 angles}              & \multicolumn{3}{c|}{128 angles}              & \multicolumn{3}{c|}{64 angles}               \\ \hline
Method                  & Parameter           & $\sigma=0.5\%$ & $\sigma=1\%$ & $\sigma=2\%$ & $\sigma=0.5\%$ & $\sigma=1\%$ & $\sigma=2\%$ & $\sigma=0.5\%$ & $\sigma=1\%$ & $\sigma=2\%$ \\ \hline
MLEM                    & Iterations           & 40             & 35           & 20           & 40             & 35           & 20           & 20             & 20           & 20           \\ \hline
\multirow{4}{*}{PnP-MM} & $\lambda$           & 0.12           & 0.20         & 0.20         & 0.10           & 0.15         & 0.15         & 0.10           & 0.10         & 0.10         \\
                        & $\sigma \times 255$ & 4              & 4            & 8            & 5              & 5            & 8            & 5              & 8            & 8            \\
                        & $\tau$              & 1              & 1            & 1            & 1              & 1            & 1            & 1              & 1            & 1            \\
                        & Iterations          & 600            & 600          & 600          & 600            & 600          & 600          & 600            & 600          & 600          \\ \hline
\end{tabular}}
\end{table*}

\end{document}